\let\NAT@parse\undefined
    \let\cref\crtcref
    \let\cref\crtcref
\newtheorem{theorem}{Theorem}
\newtheorem{lemma}{Lemma}
\newtheorem{corollary}{Corollary}
\newtheorem{proposition}{Proposition}    
\newtheorem{definition}{Definition}
\definecolor{newcolor}{rgb}{0, 0, 1}
\definecolor{removecolor}{rgb}{1, 0, 0}
\newcommand{\jaime}[1]{\ifthenelse{\boolean{include-notes}}{\textcolor{orange}{\textbf{JFF:} #1}}{}}
\newcommand{\haimin}[1]{\ifthenelse{\boolean{include-notes}}{\textcolor{teal}{\textbf{HH:} #1}}{}}
\newcommand{\justin}[1]{\ifthenelse{\boolean{include-notes}}{\textcolor{Peach}{\textbf{Justin:} #1}}{}}
\newcommand{\maria}[1]{\ifthenelse{\boolean{include-notes}}{\textcolor{violet}{\textbf{Maria:} #1}}{}}
\newcommand{\remove}[1]{\ifthenelse{\boolean{include-remove}}{\textcolor{removecolor}{\sout{#1}}}{}}
\newcommand{\new}[1]{\ifthenelse{\boolean{include-new}}{\textcolor{newcolor}{#1}}{#1}}
\definecolor{porange}{HTML}{E77500} 
\newcommand{\mean}{\mu}
\newcommand{\covar}{\Sigma}
\newcommand{\gaussian}{{\mathcal{N}}}
\newcommand{\expectation}{{\mathbb{E}}}
\newcommand{\mixturecoeff}{w}
\newcommand{\mode}{m}
\DeclareMathOperator*{\argmax}{arg\,max}
\newcommand{\trace}{\operatorname{tr}}
\newcommand{\logdet}{\operatorname{logdet}}
\newcommand{\state}{{x}}
\newcommand{\ctrl}{{u}}
\newcommand{\dstb}{{d}}
\newcommand{\nominaltraj}{{\eta}}
\newcommand{\dyn}{{f}}
\newcommand{\lterm }{z^i_{t+1}}
\newcommand{\qterm }{Z^i_{t+1}}
\newcommand{\jxt}{x_t} 
\newcommand{\jut}{u_t} 
\newcommand{\valfunc}{{V}}
\newcommand{\qfunc}{{\mathcal{Q}}}
\newcommand{\policy}{{\pi}}
\newcommand{\policyref}{{\tilde{\pi}}}
\newcommand{\policyrefGain}{{\tilde{K}}}
\newcommand{\policyrefOffset}{{\tilde{\kappa}}}
\newcommand{\reg}{{\lambda}}
\newcommand{\nodeset}{\mathcal{NS}}
\newcommand{\node}{n}
\newcommand{\tnode}{{\tilde{\node}}}
\newcommand{\rebuttal}[1]{#1}
\title{Blending Data-Driven Priors in Dynamic Games}
\author{Justin Lidard$^{1,*}$, \thanks{$^{1}$Department of Mechanical and Aerospace Engineering, Princeton University, Princeton, NJ 08540, USA} Haimin Hu$^{2,*}$ \thanks{$^{2}$Department of Electrical and Computer Engineering, Princeton University, Princeton, NJ 08540, USA}, Asher Hancock$^{1,\dagger}$, Zixu Zhang$^{2,\dagger}$, \\ \textnormal{Albert Gim\'o Contreras$^{2}$, Vikash Modi$^{2}$, Jonathan A. DeCastro$^{3}$, Deepak Gopinath$^{3}$,} \\ \textnormal{ Guy Rosman$^{3}$, \thanks{$^{3}$Toyota Research Institute, Cambridge, MA 02139, USA} Naomi Ehrich Leonard$^{1}$, Mar\'ia Santos$^{1}$, and Jaime Fern\'andez Fisac$^{2}$  \thanks{This research has been supported in part by an NSF Graduate Research Fellowship. This work is partially supported by Toyota Research Institute (TRI). It, however, reflects solely the opinions and conclusions of its authors and not TRI or any other Toyota entity.}} \thanks{$^{*}$J. Lidard and H. Hu contributed equally.}\thanks{ $^{\dagger}$A. Hancock and Z. Zhang contributed equally. }}
\begin{document}

\maketitle

\begin{abstract}
    As intelligent robots like autonomous vehicles become increasingly deployed in the presence of people,
    the extent to which these systems should leverage 
    model-based game-theoretic planners versus data-driven policies for safe, interaction-aware motion planning remains an open question.
    Existing dynamic game formulations assume all agents are task-driven and behave optimally.
    However, in reality, humans tend to deviate from the decisions prescribed by these models, and their behavior is better approximated under a \textit{noisy-rational} paradigm. 
    %
    In this work, we investigate a principled methodology to blend a data-driven \textit{reference policy} with an optimization-based game-theoretic policy.
    We formulate \textit{KLGame}, an algorithm for solving non-cooperative dynamic game with Kullback-Leibler (KL) regularization with respect to a general, stochastic, and possibly multi-modal reference policy.
    Our method incorporates, for each decision maker, a tunable parameter that permits \textit{modulation} between task-driven and data-driven behaviors.
    We propose an efficient algorithm for computing multi-modal approximate feedback Nash equilibrium strategies of KLGame in real time.
    Through a series of simulated and real-world autonomous driving scenarios, we demonstrate that 
    KLGame policies can more effectively incorporate guidance from the reference policy and account for noisily-rational human behaviors versus non-regularized baselines. Website with additional information, videos, and code: \href{https://kl-games.github.io/}{https://kl-games.github.io/}.
\end{abstract}

\section{Introduction}


 \begin{figure}
    \centering
    \includegraphics[width=0.5\textwidth]{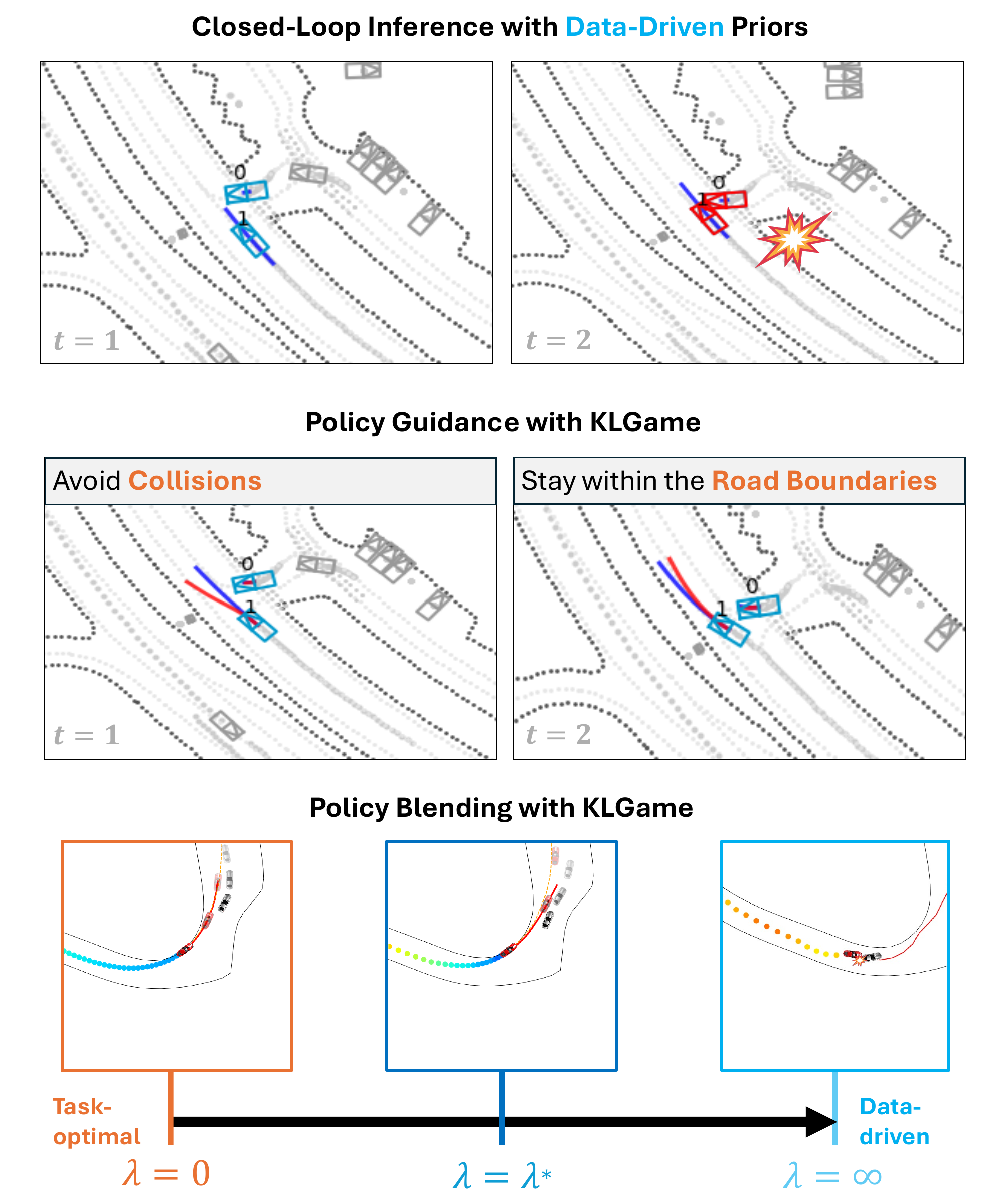}
    \caption{\rebuttal{Our proposed approach can seamlessly integrate data-driven policies with optimization-based dynamic game solutions. \textit{Top:} Data-driven behavior predictors, such as transformer-based methods, provide \textit{marginal} (i.e., single-agent) priors for human motion prediction, but may struggle to model \textit{closed-loop}, multi-agent interactions. 
    \textit{Middle:} Our proposed method, KLGame, allows a robot to incorporate \textit{guidance} from data-driven rollouts while performing online game-theoretic planning in closed-loop.
    \textit{Bottom:} KLGame uses a \textit{tunable} parameter $\lambda$ that modulates behaviors on a spectrum: $\lambda=0$ gives a deterministic dynamic game (\textit{task-optimal}), and $\lambda\rightarrow \infty$ gives multi-modal behavior cloning. We call this tunability \textit{policy blending}. }}
    \vspace{-0.6cm}
    \label{fig: Rollouts}
\end{figure}

Planning safe trajectories in the presence of multiple decision-making agents is a long-standing challenge in robotics. Desired performance and safety behaviors can often be encapsulated in mathematical models and algorithms that govern the operation of the agents, offering analytically tractable guarantees. However, first-principles planners may fall short in capturing the nuanced variety of scenarios that are subject to arise in interactive settings. In turn, data-driven priors can be extremely informative in these situations, with \textit{policy alignment} strategies allowing agents to incorporate fine-tuned, scenario-specific heuristics as well as human-desired intent into their motion planning. 

Two diverging approaches have recently emerged for planning in the presence of nearby agents. Dynamic game theory~\cite{bacsar1998dynamic}, which models interactions as a multi-agent optimal control problem, allows computing a task-optimal plan simultaneously for all agents in the scene. However, while these methods can effectively capture high-level rational behaviors, other agents do not always act rationally---i.e., they do not necessarily pick the game-theoretically optimal action. Moreover, gradient-based game-theoretic solvers may struggle to model \textit{multi-modal}, \textit{mixed-strategy} (stochastic) interactions, wherein distinct high-level behaviors may arise from the same objectives, leading to several local optima in the space of rewards. Conversely, behavior cloning methods provide a framework for end-to-end learning of multi-modal trajectory distributions from data, but may yield suboptimal or unsafe predictions. Such data-driven methods exploit recent advances in autonomous vehicle perception systems by synthesizing high-dimensional sensor inputs, including a road map, traffic light states, and LIDAR data. While imitation learning can provide a flexible and efficient approximation for interactions modeled by dynamic game theory, predictions may be suboptimal or unsafe.

Integrated prediction and planning approaches \cite{espinoza2022deep, lindemann2023safe} are a popular middle-ground approach that combines first principles from game-theoretic planning with data-driven models derived from human motion. In this paradigm, a deep-learned motion forecasting system first produces a trajectory prediction before planning begins, reducing the computational load on the planner. However, substantially different behaviors may arise depending on whether agents adhere more strongly to the data-driven or optimal behavior. Fig.~\ref{fig: Rollouts} provides an example of a multi-modal interaction where a human neglects an oncoming vehicle due to a visual occlusion. Since the vehicle's goal is near the human, the human's safety is directly in conflict with the target of the vehicle. Furthermore, the most likely motion forecast for both agents predicts that both will go straight, necessitating intervention to avoid a collision. Neither the game-theoretic nor imitation-learned plan provides a safe trajectory for both agents,
calling for another method.

In this work, we introduce KLGame, a novel multi-modal game-theoretic planning framework that incorporates data-driven policy models (such as motion forecasting) into each agent's policy optimization through a regularization bonus in the planning objective. KLGame incentivizes the trajectories of planning agents to not only optimize hand-crafted cost heuristics, but also to adhere to a \textit{reference policy}. In this work, we assume that the reference policy is distilled from data or expert knowledge, is stochastic, and may be multi-modal in general. In contrast to other integrated prediction and planning methods, KLGame (i) provides an analytically and computationally sound methodology for planning under strategy uncertainty, exactly solving the regularized stochastic optimization problem; and (ii) incorporates \textit{tunable} multi-modal, data-driven motion predictions in the optimal policy through a scalar parameter, allowing the planner to modulate between purely data-driven and purely optimal behaviors.

\noindent{\textbf{Statement of contributions.}} We make two key contributions:
\begin{itemize}
    \item We introduce \textit{KLGame}, a novel stochastic dynamic game that blends \textit{interaction-aware} task optimization with \textit{closed-loop} policy guidance via Kullback-Leibler (KL) regularization. We provide an in-depth analysis in the linear-quadratic (LQ) setting with Gaussian reference policies and show KLGame permits an analytical \textit{global feedback Nash equilibrium}, which naturally generalizes the solution of the maximum-entropy game~\cite{mehr2023maximum}. 
    \item We propose an efficient and scalable trajectory optimization algorithm for computing approximate feedback Nash equilibria of KLGame with general nonlinear dynamics, costs, and \textit{multi-modal} reference policies. Experimental results on Waymo's Open Motion Dataset demonstrate the efficacy of KLGame in leveraging data-driven priors compared to state-of-the-art methods~\cite{mehr2023maximum,fridovich2020efficient,shi2023mtr++}.
\end{itemize}

\section{Related Work}
\label{sec:related_work}
Our work relates to recent advances in game-theoretic motion planning, stochastic optimal control, and multi-agent trajectory prediction.

\subsection{Game-Theoretic Motion Planning}
\label{subsec:game_theoretic_motion_planning}

Game-theoretic motion planning is a popular choice for modeling multi-agent non-cooperative interactions, such as autonomous driving \cite{fridovich2020efficient, mehr2023maximum, schwarting2019social, wang2021game, hu2023activeIJRR}, crowd navigation \cite{sun2021move}, distributed systems \cite{williams2023distributed, hu2020non}, drone racing \cite{spica2020real}, and shared control \cite{music2020haptic}. A central focus of game-theoretic planning is local Nash equilibria (LNE) \cite{bacsar1998dynamic}, in which no agent is unilaterally incentivized to deviate from their strategy. LNEs have been studied extensively in autonomous driving as a model for interaction in merging \cite{geiger2021learning,cleac2020algames,liu2023learning}, highway overtaking \cite{le2021lucidgames,Fisac2019-hierarchical,hu2022activeWAFR}, and racing \cite{schwarting2021stochastic,wang2021game}. However, in many real-world settings, agents are not perfectly rational and may deviate significantly from actions they believe are optimal due to distractions \cite{steinberger2017road}, imperfect information \cite{talebpour2015modeling, hu2023belgame}, or poor human-machine interfacing \cite{young2017toward}. In this work, we adopt the principle of \textit{bounded rationality} \cite{ziebart2008maximum, wulfmeier2017large, evens2021, phan2022driving}, in which agents act rationally with likelihood proportional to the distribution of utilities over actions. 

While LNEs provide a mechanism for predicting and analyzing multi-agent interactions, multiple LNEs may arise depending on the initial condition of the joint (multi-agent) system \cite{bacsar1998dynamic}. Scenarios where multiple equilibria exist are called \textit{multi-modal} due to the existence of distinct (and possibly stochastic) outcomes, the selection of which might depend on individual preferences. Multi-modality in planning has recently become a popular object of study for modeling high-level decision-making in games. Peters et al.~\cite{peters2020inference} introduce an inference framework for maximum a posteriori equilibria aligned control by randomly selecting seed strategies and solving multiple seeds in parallel to an equilibrium, to then normalize their weights in a manner similar to a particle filter.  So et al. \cite{so2022maximum, so2023mpogames} show that equilibria can be efficiently clustered and inferred using a Bayesian update, which can then be used to hedge different strategies using a QMDP-style \cite{littman1995learning} cost estimate.
Peters et al.~\cite{peters2023contingency} show that multiple game-theoretic contingencies can be solved, with an arbitrary branching time, using mixed-complementarity programming. 
Recent work~\cite{hu2023activeIJRR} uses implicit dual control to tractably compute an active information gathering policy for a class of partially-observable stochastic games, while preserving the multi-modal information encoded in the robot's belief states with scenario optimization.
Follow-up work~\cite{hu2023belgame} extends this idea to a high-dimensional adversarial setting using deep reinforcement learning to synthesize the robot's policy at scale.

Multi-modality in planning can arise from uncertain or unknown objectives. For example, in a merging setting, which player prefers to go first may directly alter the outcome of the interaction, and this \textit{objective uncertainty} may lead to indecision.
Cleac’h et al. \cite{le2021lucidgames} introduce a motion planning method for uncertain linear cost models by estimating cost parameters using an unscented Kalman filter. Chen et al. \cite{chen2021interactive} introduce an MPC-based framework for interactive trajectory optimization using deep-learning predictions as input.
Hu et al. \cite{hu2023emergent} study emergent coordination in multi-agent interactions when cost parameters follow an opinion-dynamic interaction over a graph.
Recent work~\cite{peters2022rss} uses differentiable optimization to learn open-loop mixed strategies for non-cooperative games based on trajectory data.
Liu et al. \cite{liu2023learning} extend this idea to compute generalized equilibria, a setting where hard constraints are present, and use deep learning to bolster the computation performance of the game solver. Another follow-up work~\cite{li2023cost} further studies the setting of learning a feedback Nash equilibrium strategy. 
In the potential game setting, Diehl et al. \cite{diehl2023energy} show that multiple open-loop Nash equilibria can be computed in parallel using deep-learned game parameters. 

To the best of our knowledge, our approach is the first closed-loop, model-based game-theoretic planner that can incorporate multi-modal priors from data-driven trajectory prediction models for guided exploration and producing scene-consistent interactions. Our method makes only mild assumptions on the environment dynamics and planning costs (e.g. being differentiable). Moreover, our provision of a scalar regularization parameter permits \textit{tunable} adherence to data-driven behavior at inference time, in contrast to traditional game-theoretic planning and end-to-end equilibrium learning, which are rigid in their respective assumptions about planning objectives (i.e. task-driven or data-driven). 

\subsection{Stochastic Optimal Control and Dynamic Games}
\label{subseq:stochastic_optimal_control}


In contrast to greedy-optimal, deterministic behavior predicted by some classical control-theoretic planners, stochastic behavior has been exhibited in many real-world settings, such as biological muscle control \cite{todorov2004optimality}, animal foraging \cite{bartumeus2008fractal}, and urban driving \cite{ettinger2021waymo}. Stochastic optimal control (SOC) provides a mechanism for solving and explaining \textit{stochastic} optimal policies through accommodating uncertainty in decision-making. Energy-based methods \cite{kim2020hamilton, theodorou2012relative, haarnoja2017reinforcement, garg2021iq} are a well-studied technique for solving optimal policies by analyzing utility functions similar to the total energy in statistical physics. KL control \cite{todorov2006linearly, todorov2009compositionality, guan2014online, ito2022kullback} is a branch of energy-based methods that studies solutions for SOC problems where the control cost is augmented with a Kullback-Leibler divergence regularizer term that tries to keep the resultant policy close to some \textit{reference policy}, which may have different uses depending on the task. For example, the reference policy could minimize control effort \cite{todorov2006linearly}, guide exploration \cite{ok2018exploration}, or incorporate human feedback \cite{munos2023nash}.

Due to the intractability of the expectation term for general stochastic optimal control and dynamic games, scenario optimization~\cite{bernardini2011stabilizing,campi2018general} is often used as an approximate solution method.
Schildbach et al.~\cite{schildbach2015scenario} uses scenario optimization for autonomous lane change assistance when the ego vehicle is interacting with other human-controlled vehicles, whose future motions are predicted with a pre-specified scenario generation model and incorporated in a model predictive control (MPC) problem.
Chen et al.~\cite{chen2021interactive} develops a scenario-based MPC algorithm to handle multi-modal reactive behaviors of uncontrolled human agents.
In~\cite{hu2022sharp,hu2023activeIJRR}, a provably safe scenario-based MPC planner is proposed for interactive motion planning in uncertain, safety-critical environments, which improves task performance by preempting emergency safety maneuvers triggered by unlikely agent behaviors.
Recent work~\cite{li2023scenario} uses alternating direction method of multipliers (ADMM) based scenario optimization to tractably compute Nash equilibria for a class of constrained stochastic games subject to parametric uncertainty.

Maximum-entropy optimal control constitutes another branch of energy-based methods (essentially equivalent KL control with a uniform reference policy) that incentives entropy of the optimal policy, which can be used to incentivize policy robustness \cite{eysenbach2021maximum}, learn from human data \cite{wu2020efficient}, and escape local minima in long-horizon tasks \cite{pitis2020maximum}.
Mehr et al.~\cite{mehr2023maximum} take a step forward into the multi-agent land and study a maximum-entropy dynamic game. 

In this work, we provide a mechanism for incorporating policies guided by data-driven models from large aggregated datasets in model-based game-theoretic planning.
We show that our formulation is a generalization of the maximum-entropy dynamic game when the reference policy becomes arbitrarily uninformative.
We also leverage scenario optimization to tractably handle multi-modal reference policy at scale.

\subsection{Multi-Modal Multi-Agent Motion Prediction}
\label{subseq:multi_modal_motion_prediction}

Multi-agent trajectory prediction seeks to model agent-to-agent interactions that greatly affect joint outcomes versus predicting independently for each agent \cite{helbing1995social}. 
Recently, multi-agent motion prediction \cite{shi2022motion, shi2023mtr++, salzmann2020trajectron++} has enjoyed great success in predicting multi-modal human behavior in part due to advances in transformer-based architectures \cite{zhou2022hivt, jia2023hdgt}, natural language processing \cite{seff2023motionlm}, and diffusion models \cite{jiang2023motiondiffuser}. The output of a motion predictor is a weighted set of trajectory samples \cite{Ngiam2021SceneTransformer}, typically in the form of a mixture model over discrete modes \cite{Varadarajan2021MultipathPlusPlus}.
Finally, some approaches explicitly model discrete agent interactions in order to better account for them and improve accuracy
\cite{kumar2020interaction, sun2022m2i,ban2022deep}.

In sample-based prediction, maintaining the diversity of samples to represent distinct outcomes is an active area of research. Recent works \cite{lidard2023nashformer, huang2023gameformer} emphasize \textit{multi-modality} by incorporating rollouts and fictitious play to increase diversity in the prediction framework. Farthest Point Sampling (FPS) \cite{huang2020diversitygan,shiroshita2020behaviorally}, Non-Maximum Suppression (NMS) \cite{zhao2021tnt}, and/or neural adaptive sampling \cite{huang2022hyper} provide methods for encouraging outcome diversity using pairwise distances between trajectories as a metric. Sample diversity makes prediction methods a popular data-driven choice for human behavior. 

Prediction models have been incorporated in planning through an integrated prediction and planning approach, typically a model-predictive controller that seeks to avoid other agents \cite{lindemann2023safe, dixit2023adaptive}. In \cite{espinoza2022deep}, joint motion forecasts are incorporated as a constraint in the motion planning framework, allowing the planning agent to better understand high-dimensional scene data (such as the map) through low-dimensional action selection. In~\cite{huang2023differentiable}, a parameter-shared cost model is learned from data for all agents in the scene, allowing for scene-dependent cost formulation. 

\rebuttal{Generative modeling of traffic scenarios is also capable of compositing learned behavior behaviors with model-based objectives. Recent works \cite{zhong2023guided, zhong2023language} utilize signal temporal logic to guide the diffusion model generating desired behaviors. VBD \cite{huang2024versatile} applies a game-theoretic gradient descent-ascent guidance strategy to generate interactive safety-critical scenarios with a diffusion model. 
}

\rebuttal{In this work, we provide a principled approach to integrate the \textit{multi-modality} of motion predictors with closed-loop game-theoretic motion planning. Our framework complements end-to-end learned methods by allowing an agent to consider multiple motion modes and compute an optimal policy for each one, rather than needing to select a mode \textit{a-priori}. Furthermore, by specifying additional incentives such as safety, KLGame allows a robot to reason \textit{strategically} over time in a \textit{closed-loop} manner, making it more robust to unmodelled scenarios through feedback when compared to open-loop end-to-end predictions.}

\section{Problem Formulation}
\label{sec:problem_formulation}
\begin{figure*}[t!]
    \centering
    \includegraphics[width=\textwidth]{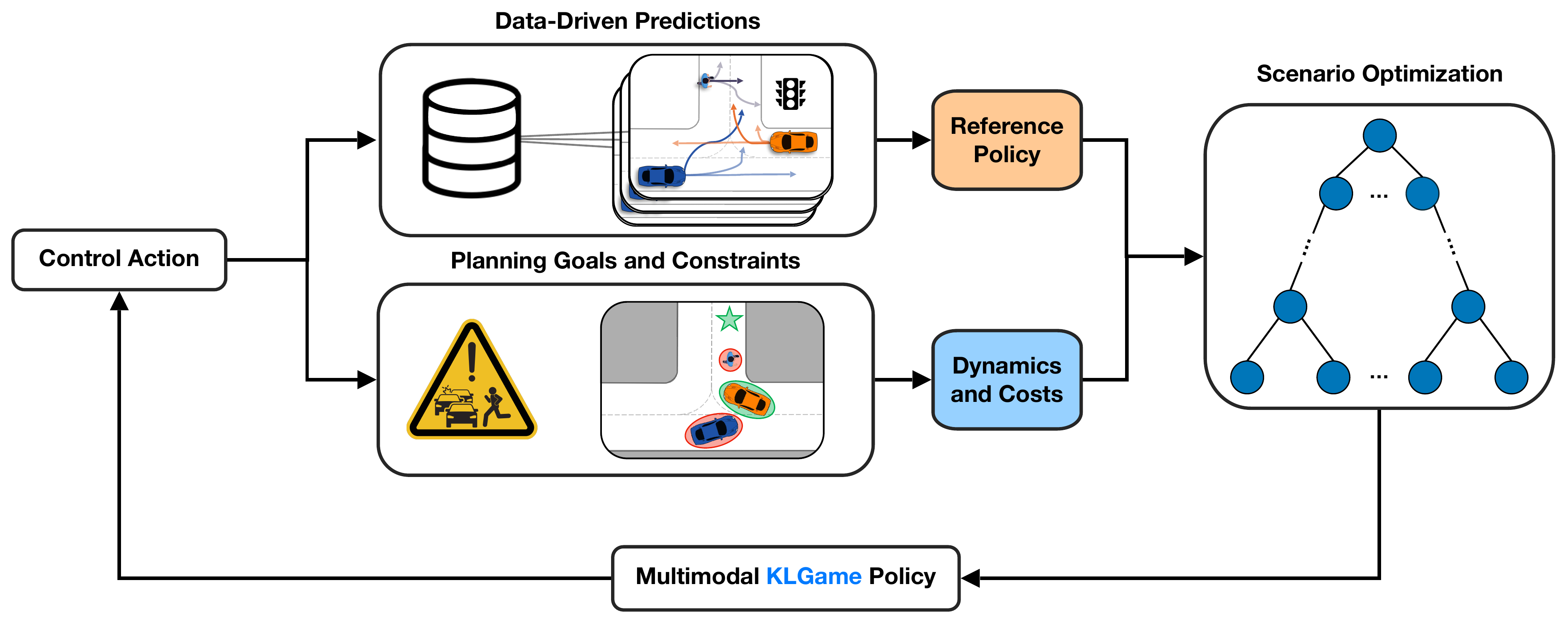}
    \caption{Overview of the proposed algorithmic framework for computing an approximate feedback Nash equilibrium for KLGame. In essence, we use scenario optimization to compute a multi-modal mixed strategy via blending a multi-modal reference policy learned from human interaction data with optimization-based game-theoretic policy.}
    \label{fig:system_diagram}
\end{figure*}

\noindent \textbf{Notation}. \rebuttal{We will take $T$ as the planning horizon and define $[T] := \{1,...,T\}$. Similarly, if there are $N$ players we will use $[N] := \{1,...,N\}$ to refer to the set containing all the players' indices. Throughout the paper we will use subscripts to refer to time and superscripts for player. For example, $u_t^i$ will stand for the action of player $i$ at time $t$. For a matrix $M$ we use $M \succ 0$ (or $M \succeq 0$) to indicate M is positive (or semi-positive) definite. We use $\sigma(M)$ to indicate the set of singular values of $M$.
}

We consider a discrete-time dynamic game with $N$ robots over a finite horizon $T$ governed by the nonlinear dynamics
\begin{equation} \label{eqn: dynamics}
    x_{t+1} = \dyn_t(\jxt, \jut),
\end{equation}
where $\jxt \in \mathcal X$ is the joint state of the dynamical system and $u_t := (u_t^1, ..., u_t^N) $
is the joint control input. Each player $i \in [N]$ applies an action according to a stochastic policy $\policy_t^i(\ctrl^i_t | \state_t)$ at each time step $t$. 


Each player $i$ minimizes a cost $J^i$ that quantifies the loss associated with the trajectory and control effort exerted by the player as well as the deviation with respect to a reference policy $\policyref_t^i(\ctrl^i_t | \state_t)$ over a time horizon $T$:
\begin{equation} \label{eqn: total_cost}    J^i(\pi^i) = \mathbb E^{ \pi }  \left[ \sum_{t=0}^T \; l^i(\jxt, \jut)+ \reg^i D_{KL}(\pi_t^{i}(\cdot | \jxt) || \tilde \pi_t^i(\cdot|\jxt) ) \right] ,
\end{equation}
where $\pi := (\pi^1, ..., \pi^N)$ is the joint policy of all robots, $\lambda$ is an $N$-vector of nonnegative scalars, and $ D_{KL}$ is the Kullback-Leibler (KL) divergence from the reference policy,
\begin{equation} \label{eqn: kl_definition}
    D_{KL}(\pi^{i}_t || \tilde \pi^i_t) = \int_{\mathcal U} \pi^i_t(u^i_t) \log \Bigg( \frac{\pi^i_t(u^i_t)}{\tilde \pi^i_t(u^i_t)} \Bigg) du^i_t.
\end{equation}
For the KL divergence to be well-defined, we require that $\policy_t^i(\cdot | \jxt)$ is absolutely continuous with respect to $\tilde \pi_t^i(\cdot|\jxt)$, i.e., \rebuttal{$\tilde \pi_t^i(\cdot|\jxt) = 0 \implies \policy_t^i(\cdot | \jxt) = 0$ \cite{ccinlar2011probability}. We denote absolute continuity as} $\policy^i_t << \tilde \pi_t^i$. 

The scalar $\reg^i \geq 0$ provides a natural ``tuning knob'' for the stochastic dynamic game, balancing the preference between optimizing the player's nominal performance objectives, encoded in $l^i$, and aligning its policy with the reference policy, $\policyref^i$.
We refer to this new class of dynamic game as \emph{KLGame}, which can be compactly formulated as:
\begin{equation} \label{eqn: overall optimization}
\begin{array}{rlrlcl}
\displaystyle \min_{\pi^i} & J^i(\pi^i)\\
\textnormal{s.t.} & x_{t+1} = \dyn_t(\jxt, \jut), & \forall t,
\end{array}
\end{equation}
for all $i \in [N]$. Herein, we use $\neg i$ to denote all agents except for $i$, and the superscript $(\cdot)^*$, as an indicator of optimality.


Recent work~\cite{li2023cost} revealed that feedback Nash equilibria (FBNE) strategies can generate more expressive and realistic interactions than their open-loop counterpart, which is an important property for interaction-rich and safety-critical motion planning tasks such as autonomous driving.
For the same reason, we focus on seeking FBNE of game~\eqref{eqn: overall optimization} in this paper.

\begin{definition}[Feedback Nash Equilibrium~\cite{bacsar1998dynamic}]
    A policy ${\policy}^{i*}$ defines an FBNE if no player has an incentive to unilaterally alter the strategy, i.e.,
\begin{equation*}
    J(\pi^{i*}, \pi^{\neg i*}) \leq J(\pi^i, \pi^{\neg i*}), \quad \forall \policy_t^i(\cdot) \in \Pi^i,~\forall t \in [T],~\forall i \in [N].
\end{equation*}
\end{definition}

We now introduce the main technical approach for seeking FBNE of KLGame.

\section{Kullback-Leibler Regularized Games}
\label{sec:KLGame}

In this section, we propose an efficient algorithm for seeking FBNE of KLGame \eqref{eqn: overall optimization}.
Our roadmap towards such a game solver is as follows.
First, we show that a KL-regularized Bellman equation adopts a closed-form solution.
Then, we derive the analytical global Nash equilibrium strategy in the LQ setting with Gaussian reference policies.
Finally, we propose an iterative linear-quadratic (ILQ) algorithm for efficiently approximating the game solution with general nonlinear dynamics, costs, and multi-modal reference policies
based on the ILQ approximation principle. Fig. \ref{fig:system_diagram} presents the high-level architecture of our system.

\subsection{Dynamic Programming Formulation}
\label{subsec:DPformulation}


We propose to solve \eqref{eqn: overall optimization} with the dynamic programming principle, which gives the Bellman equation:
\begin{equation} \label{eqn: bellman_KL}
\begin{aligned}
        \valfunc^{i}(\state) =\inf_{\pi^{i}} \Big \{&  \mathbb E^{\pi} [\qfunc^i(\state, \ctrl)] + \reg^i D_{KL}[\pi^i(\cdot|\state)||\tilde\pi^i(\cdot|\state)] \Big \},
\end{aligned}
\end{equation}
where $\valfunc^{i}(\state) = \inf_{\pi^{i}}  J^i(\pi^i)$ denotes the optimal
value function and $\qfunc^i(\state, \ctrl) := l^i(\state, \ctrl) + V^{i\prime}(f(\state, \ctrl))$ is the Q-value function.
In the remainder of this section, we show that \eqref{eqn: bellman_KL} admits an optimal solution of a particular structure.


We first state a helpful Lemma, which is a slight modification to the dual of the Donsker-Varadhan variational formula \cite{donsker1983asymptotic}.



\begin{lemma}
(Regularized variational formula~\cite{dupuis2011weak,dupuis2019representations}.)
\label{lem:Dupuis}
Let $\qfunc: \mathcal X \times \mathcal U \rightarrow \mathbb R$ be any bounded measurable function. Then, for any nonnegative real number $\reg$, probability measures $\pi$, $\tilde \pi$ and states $x \in \mathcal X$:
\begin{equation} \label{eqn: dupuis_lemma}
\begin{aligned}
        -\reg &\log \int_{\mathcal U} e^{-\qfunc(x, u)} \tilde{\pi}(u|x)du  \\
        &= \underset{\pi}{\inf} \Bigg\{ \expectation^\policy \left[ \qfunc(x,u) \right] + \reg D_{KL}(\pi(\cdot | x) || \tilde \pi(\cdot | x)) \Bigg\}.
\end{aligned}
\end{equation}
Moreover, the infimum in \eqref{eqn: dupuis_lemma} is attained with minimizer $\pi^*$ if $\pi^*$ satisfies the expression:
\begin{equation}
    \pi^*(u|x) = \frac{e^{-\qfunc(x,u)/\reg} \tilde \pi(u|x) }{\int_{\mathcal U} e^{-\qfunc(x, u)/\reg} \tilde \pi(u|x)du }.
\end{equation}
\end{lemma}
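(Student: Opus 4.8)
The plan is to establish \eqref{eqn: dupuis_lemma} as an instance of the Gibbs variational principle (a dual of the Donsker--Varadhan formula, as noted), derived from a single change-of-measure identity that ``completes the square'' in relative entropy; the algebra is short, and essentially all the care goes into measure-theoretic bookkeeping. Fix $x \in \mathcal X$. The case $\reg = 0$ is degenerate — the stated minimizer $\pi^*$ is then ill-defined and the identity must be read in the limiting sense $\reg \downarrow 0$ (Laplace's method / Varadhan's lemma) — so assume $\reg > 0$. Since $\qfunc$ is bounded and measurable, the normalizer $Z := \int_{\mathcal U} e^{-\qfunc(x,u)/\reg}\,\tilde\pi(u|x)\,du$ satisfies $e^{-\|\qfunc\|_\infty/\reg} \le Z \le e^{\|\qfunc\|_\infty/\reg}$, so $Z \in (0,\infty)$ and $\pi^*(\cdot|x) := Z^{-1}\, e^{-\qfunc(x,\cdot)/\reg}\, \tilde\pi(\cdot|x)$ is a well-defined probability measure. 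Because $e^{-\qfunc(x,\cdot)/\reg}$ is bounded away from $0$ and from $\infty$, the measures $\pi^*(\cdot|x)$ and $\tilde\pi(\cdot|x)$ have the same null sets; hence a policy $\pi(\cdot|x)$ is absolutely continuous with respect to $\tilde\pi(\cdot|x)$ if and only if it is absolutely continuous with respect to $\pi^*(\cdot|x)$.

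Next I would dispose of the non-absolutely-continuous policies: when $\pi(\cdot|x) \not\ll \tilde\pi(\cdot|x)$ the divergence $D_{KL}(\pi(\cdot|x)\,\|\,\tilde\pi(\cdot|x))$ equals $+\infty$ by definition while $\expectation^{\pi}[\qfunc(x,\cdot)]$ stays finite ($\qfunc$ is bounded), so the right-hand objective of \eqref{eqn: dupuis_lemma} is $+\infty$ there, and the infimum is unchanged if we restrict to $\pi(\cdot|x) \ll \tilde\pi(\cdot|x)$. For any such $\pi$, the Radon--Nikodym chain rule $\tfrac{d\pi}{d\tilde\pi} = \tfrac{d\pi}{d\pi^*}\,\tfrac{d\pi^*}{d\tilde\pi}$ together with $\log \tfrac{d\pi^*}{d\tilde\pi}(u) = -\qfunc(x,u)/\reg - \log Z$, integrated against $\pi(\cdot|x)$, yields the key identity
\begin{equation} \label{eqn: gibbs_completing_square}
\expectation^{\pi}\!\left[\qfunc(x,\cdot)\right] + \reg\, D_{KL}\!\big(\pi(\cdot|x)\,\|\,\tilde\pi(\cdot|x)\big) \;=\; \reg\, D_{KL}\!\big(\pi(\cdot|x)\,\|\,\pi^*(\cdot|x)\big) \;-\; \reg \log Z ,
\end{equation}
to be read in $(-\infty,+\infty]$; since $\qfunc$ is bounded and $-\reg\log Z$ is a finite constant, the two relative entropies appearing in \eqref{eqn: gibbs_completing_square} are finite or infinite together, which is exactly what legitimizes the rearrangement.

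To conclude, I would invoke nonnegativity of relative entropy (Gibbs' inequality): $D_{KL}(\pi(\cdot|x)\,\|\,\pi^*(\cdot|x)) \ge 0$, with equality if and only if $\pi(\cdot|x) = \pi^*(\cdot|x)$ almost everywhere. Substituting into \eqref{eqn: gibbs_completing_square} shows that, for every admissible $\pi$, the right-hand objective of \eqref{eqn: dupuis_lemma} is at least $-\reg\log Z$, with this lower bound attained precisely at $\pi = \pi^*$ (which, by the first paragraph, is admissible); taking the infimum and unwinding $Z = \int_{\mathcal U} e^{-\qfunc(x,u)/\reg}\tilde\pi(u|x)\,du$ proves \eqref{eqn: dupuis_lemma} and identifies $\pi^*$ as the (essentially unique) minimizer. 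I expect the only real obstacle to be the measure-theoretic rigor behind \eqref{eqn: gibbs_completing_square} — justifying the chain rule $\pi$-almost everywhere, verifying that $\log(d\pi/d\pi^*)$ is $\pi$-integrable whenever either side of \eqref{eqn: gibbs_completing_square} is finite, and pinning down the equality case of Gibbs' inequality — while the bounds on $Z$ and the final rearrangement are routine. (For consistency with the stated minimizer, the exponent on the left of \eqref{eqn: dupuis_lemma} should be read as $e^{-\qfunc(x,u)/\reg}$; the two coincide verbatim when $\reg = 1$.)
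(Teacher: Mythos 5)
Your proposal is correct and follows essentially the same route as the paper's own proof: both rewrite the objective via the chain rule $\tfrac{d\pi}{d\tilde\pi}=\tfrac{d\pi}{d\pi^*}\tfrac{d\pi^*}{d\tilde\pi}$ to obtain $\expectation^{\pi}[\qfunc]+\reg D_{KL}(\pi\|\tilde\pi)=-\reg\log Z+\reg D_{KL}(\pi\|\pi^*)$ and then conclude by nonnegativity of relative entropy. Your additional bookkeeping (positivity and finiteness of $Z$, exclusion of non-absolutely-continuous $\pi$, the degenerate case $\reg=0$) is sound, and your closing observation is right: the exponent on the left-hand side of \eqref{eqn: dupuis_lemma} should indeed read $e^{-\qfunc(x,u)/\reg}$, as the paper's own penultimate display confirms.
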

\begin{proof}
    See Appendix~\ref{apdx:proof_lem_Dupuis}.
\end{proof}
%
Lemma~\ref{lem:Dupuis} presents a convenient analytical form of the solution to Bellman equation~\eqref{eqn: bellman_KL}:
the optimal regularized policy is a (normalized) product of two density functions, one of which being precisely the reference policy.
This result, as we will see momentarily, serves as a key step towards deriving the KLGame equilibrium strategy.

\subsection{Global FBNE: The Linear-Quadratic Gaussian Case}
\label{subseq:globalFBNE_LQG}
Even without the KL regularizer in \eqref{eqn: total_cost}, seeking a local feedback Nash equilibrium of a non-LQ trajectory game is generally intractable~\cite{laine2023computation}. However, if the system dynamics are linear, the cost functions are convex quadratic functions of the state (LQ), and the reference policy is Gaussian, the \emph{global} Nash equilibrium can be computed efficiently.
\rebuttal{Specifically, the system dynamics are linear when
\begin{equation} \label{eqn: linear_dynamics}
    x_{t+1} = A_t\jxt + \sum_{i \in [N]} B_t^i u^i_t + \dstb_t,
\end{equation}
where $A_t,B_t^i$ are the state-transition matrices
and $\dstb_t \sim \gaussian(0, \covar_{\dstb})$ Gaussian noise with zero mean and covariance $\Sigma_\dstb$.
The stage cost of each player is a quadratic function of state and control, i.e., $l^i(x_t, u_t) = \jxt^\top Q^i_t \jxt + \sum_{j \in [N]} {u_t^j}^\top R_t^{ij} u_t^j$,
where $Q^i$ and $R^{ij}$ are positive semidefinite matrices of appropriate dimension penalizing state and control authority. 
The reference policies $\policyref^i$ are Gaussian if
\begin{equation} \label{eqn: gaussian_reference}
    \policyref^i_t = \gaussian(\tilde{\mu}^i_t, \tilde{\Sigma}^i_t)
\end{equation}
for all $t \in [T]$ and $i \in [N]$.}
       
We call KLGame, in this special case, the KL-regularized Linear-Quadratic Gaussian (KL-LQG) game.
\rebuttal{
%
Our goal is to find, for all agents $i$, the global FBNE strategy $\pi^{i*}$ of a KL-LQG game. We show in the following theorem that $\pi^{i*}$ is Gaussian with a \textit{state-dependent} mean $\mu^{i*} = \mu^{i*}(x)$, which depends linearly on $x$, and a \textit{state-independent} covariance $\Sigma^{i*}.$ Furthermore, we show that $\mu^{i*}$ and $\Sigma^{i*}$  are parameterized by the dynamics ($A$ and $B$), task costs ($Q$ and $R$), reference policy ($\tilde{\mu}$ and $\tilde{\Sigma}$), and value function parameters.
}


\begin{theorem}[Global FBNE of KL-LQG Game]
\label{thm:global_nash}
The $N$-player nonzero-sum KL-LQG dynamic game~\eqref{eqn: overall optimization} admits a unique global feedback Nash equilibrium solution if,
\begin{enumerate}
    \item  \rebuttal{The dynamics follow the functional form of \eqref{eqn: linear_dynamics},
    where $x_{0} \sim \gaussian(\mean_{x_{0}}, \covar_{x_{0}})$, $\dstb_t \sim \gaussian(0, \covar_{\dstb})$},

    \item The costs, \rebuttal{introduced in \eqref{eqn: total_cost}}, have the functional form \label{hip:1.2}
    \begin{equation*}
    \begin{aligned}
        J^i(\pi^i) =& \mathbb E^{ \pi } \left[ \sum_{t=0}^T \frac{1}{2} \Big(\jxt^\top Q^i_t \jxt +
        \sum_{j \in [N]} {u_t^j}^\top R_t^{ij} u_t^j \Big)\right]\\ &+ \sum_{t=0}^T  \reg^i D_{KL}(\pi_t^{i} || \tilde \pi_t^i ) ,
    \end{aligned}
    \end{equation*}
    where, for all $t \in [T]$, $Q^i_t \succeq 0$, $R_t^{ij} \succeq 0$, $\forall i,j \in [N], j \neq i$, $R_t^{ii} \succ 0$, $\forall i \in [N]$,

    \item \rebuttal{The reference policies are Gaussian, as in \eqref{eqn: gaussian_reference}}.
\end{enumerate}
Moreover, the global (mixed-strategy) Nash equilibrium is a set of time-varying policies $\pi_t^{i*} = \mathcal N(\mu_t^{i*}, \Sigma_t^{i*}),~\forall i \in [N]$ with mean and covariance given by
\begin{equation} \label{eqn: kl_optimal_policy}
\begin{aligned}
    \mu^{i*}_t &=  - K^i_t \state_t - \kappa^i_t,   \\
    \Sigma^{i*}_t &= \left[ \textcolor{red}{\frac{1}{\reg^i}} \left(R^{ii}_t + {B^i_t}^T Z^i_{t+1} B^i_t \right) \textcolor{red}{+ \left(\tilde{\covar}^i_t\right)^{-1}} \right]^{-1},
\end{aligned}
\end{equation}
where $(K^i_t, \kappa^i_t)$, \rebuttal{the state-feedback matrix and constant}, are given by solving the coupled KL-regularized Riccati equation:
\begin{equation}
    \begin{aligned}
        \big[{\color{red} \reg^i (\Tilde{\Sigma}^i_t)^{-1}}+ R_t^{ii} + B_t^{i\top}\qterm &B_t^{i\top} \big] K^i_t +  B_t^{i\top}\qterm\sum_{j\neq i}B_t^j K^j_t \\ &= B_t^{i\top}\qterm A_t, \\
        \big[ {\color{red}\reg^i(\Tilde{\Sigma}^i_t)^{-1}} + R_t^{ii} + B_t^{i\top}\qterm &B_t^{i\top} \big]\kappa^i_t + B_t^{i\top}\qterm\sum_{j\neq i}B_t^j \kappa^j_t \\ &= B_t^{i\top}\lterm {\color{red} - \reg^i(\Tilde{\Sigma}^i_t)^{-1}\Tilde{\mu}^i_t}.
    \end{aligned}
    \end{equation}
Value function parameters $(Z^i_{t}, z^i_{t})$ are computed recursively backward in time as
\begin{equation}
\begin{aligned}
\label{eq:update}
    Z_t^i =& Q_t^i+\sum_{j\in[N]} (K_t^{j})^T R_t^{i j} K_t^j+F_t^T Z_{t+1}^i F_t {\color{red}+ \reg^i K^{i\top}_t(\Tilde{\Sigma}^i_t)^{-1}K^i_t},\\
    z_t^i =& \sum_{j\in[N]} (K_t^{j})^T R_t^{i j} \kappa^j_t+F_t^T\left(z_{t+1}^i+Z_{t+1}^i \beta_t\right) \\
    &{\color{red}+ \reg^i K^{i\top}_t(\Tilde{\Sigma}^i_t)^{-1}(\kappa^i_t - \Tilde{\mu}^i_t)},
\end{aligned}
\end{equation}
with terminal conditions $Z^i_{T+1}=0$ and $z^i_{T+1}=0$.
Here, $F_t = A_t-\sum_{j\in[N]} B_t^j K_t^j$ and $\beta_t=-\sum_{j\in [N]} B_t^j \kappa_t^j$ for all $t \in [T]$.
\end{theorem}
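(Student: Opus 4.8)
The plan is to prove the theorem by backward induction on the time index $t$, carried out along the dynamic programming recursion \eqref{eqn: bellman_KL}. The induction hypothesis is that, at time $t+1$, the equilibrium value function of each player is a convex quadratic, $\valfunc^i_{t+1}(x) = \tfrac12 x^\top Z^i_{t+1} x + (z^i_{t+1})^\top x + c^i_{t+1}$ with $Z^i_{t+1}\succeq 0$, and that the equilibrium policy $\pi^{j*}_{t+1}$ of every player $j$ is Gaussian with an affine-in-$x$ mean and a state-independent covariance. The base case $t+1=T+1$ is immediate from the terminal conditions $Z^i_{T+1}=0$, $z^i_{T+1}=0$.

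For the inductive step, fix a player $i$ and suppose the other players play their (Gaussian) equilibrium policies $\pi^{\neg i*}_t$. Form the Q-function $\qfunc^i(x,u) = l^i(x,u) + \valfunc^i_{t+1}(A_t x + \sum_j B_t^j u^j + d_t)$; with linear dynamics, quadratic stage cost, and quadratic $\valfunc^i_{t+1}$, this is quadratic in $(x,u)$. Taking the expectation over the zero-mean Gaussian noise $d_t$ and over $u^{\neg i}_t\sim\pi^{\neg i*}_t$ --- both Gaussian --- yields an \emph{effective} Q-function $\bar{\qfunc}^i(x,u^i)$ that is again quadratic in $(x,u^i)$, with Hessian in $u^i$ equal to $R^{ii}_t + B_t^{i\top}Z^i_{t+1}B^i_t\succ 0$ (using $R^{ii}_t\succ 0$ and $Z^i_{t+1}\succeq 0$); the noise and the opponents' covariances enter only through an additive constant. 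The Bellman equation for player $i$ then reads $\valfunc^i_t(x)=\inf_{\pi^i}\{\mathbb{E}^{\pi^i}[\bar{\qfunc}^i(x,u^i)] + \lambda^i D_{KL}(\pi^i(\cdot|x)\|\tilde\pi^i_t(\cdot|x))\}$, to which Lemma~\ref{lem:Dupuis} applies verbatim: since the stage objective is strictly convex in $\pi^i$, the unique global minimizer is $\pi^{i*}_t(u^i|x)\propto e^{-\bar{\qfunc}^i(x,u^i)/\lambda^i}\,\tilde\pi^i_t(u^i|x)$. Being a product of the Gaussian density $\tilde\pi^i_t$ and the exponential of a concave quadratic, $\pi^{i*}_t$ is Gaussian; completing the square in $u^i$ in $\log\pi^{i*}_t(u^i|x)$ reads off the precision $\tfrac{1}{\lambda^i}(R^{ii}_t + B_t^{i\top}Z^i_{t+1}B^i_t) + (\tilde\Sigma^i_t)^{-1}$, i.e.\ the stated $\Sigma^{i*}_t$, and a linear equation for the mean $\mu^{i*}_t$ whose right-hand side is affine in $x$ and in the opponents' means.

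Next I would close the coupling across players and propagate the value function. Writing $\mu^{i*}_t = -K^i_t x - \kappa^i_t$ for all $i$ and substituting into the $N$ mean equations just obtained, then matching the coefficient of $x$ and the constant term separately, produces exactly the coupled KL-regularized Riccati system for $(K^i_t,\kappa^i_t)$ in the statement (the $\lambda^i(\tilde\Sigma^i_t)^{-1}$ and $-\lambda^i(\tilde\Sigma^i_t)^{-1}\tilde\mu^i_t$ terms being the only departure from the standard LQ feedback-Nash equations). To obtain $Z^i_t$ and $z^i_t$, I would evaluate $\valfunc^i_t(x) = \mathbb{E}^{\pi^*_t}[\qfunc^i(x,u)] + \lambda^i D_{KL}(\pi^{i*}_t\|\tilde\pi^i_t)$ using the closed-form KL divergence between two Gaussians; the quadratic-form term $\tfrac{\lambda^i}{2}(\mu^{i*}_t - \tilde\mu^i_t)^\top(\tilde\Sigma^i_t)^{-1}(\mu^{i*}_t - \tilde\mu^i_t)$ contributes precisely the terms $\lambda^i K_t^{i\top}(\tilde\Sigma^i_t)^{-1}K^i_t$ and $\lambda^i K_t^{i\top}(\tilde\Sigma^i_t)^{-1}(\kappa^i_t - \tilde\mu^i_t)$ in \eqref{eq:update}, while $\mathbb{E}^{\pi^*_t}[\qfunc^i]$ reproduces the usual recursion with closed-loop matrix $F_t = A_t - \sum_j B_t^j K_t^j$ and offset $\beta_t = -\sum_j B_t^j\kappa_t^j$; collecting terms by power of $x$ gives the stated recursions (the constant $c^i_t$, which absorbs $\tfrac12\Tr(Z^i_{t+1}\Sigma_d)$ and the Gaussian normalizers, plays no role downstream), and $Z^i_t\succeq 0$ follows since it is a sum of positive-semidefinite matrices, closing the induction.

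Finally, for uniqueness: because the dynamic programming recursion \eqref{eqn: bellman_KL} exactly characterizes every FBNE, and because the per-stage minimizer from Lemma~\ref{lem:Dupuis} is the \emph{unique} global minimizer, any FBNE must be of the form just derived, with $(K^i_t,\kappa^i_t)$ solving the coupled linear systems at every stage. Uniqueness of the equilibrium therefore reduces to unique solvability of those linear systems, which is the step I expect to be the main obstacle: one must show that the stacked coefficient matrix --- block-diagonal entries $\lambda^i(\tilde\Sigma^i_t)^{-1} + R^{ii}_t + B_t^{i\top}Z^i_{t+1}B^i_t\succ 0$ and off-diagonal blocks $B_t^{i\top}Z^i_{t+1}B^j_t$ --- is nonsingular for every $t$, so that the backward recursion (and with it the quadratic-value/Gaussian-policy ansatz) is well defined at each step; here the regularizer helps, since $\lambda^i(\tilde\Sigma^i_t)^{-1}\succ 0$ strictly strengthens the diagonal blocks relative to an unregularized LQ game. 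The remaining work --- the two Gaussian expectations, the square completion, and the coefficient matching --- is routine linear algebra.
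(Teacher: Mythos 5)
Your proposal is correct and follows the same overall strategy as the paper's proof: backward induction on a quadratic value-function ansatz, per-stage optimization of each player's policy against the other players' Gaussian equilibrium policies, and coefficient matching to obtain the coupled KL-regularized Riccati equations and the $(Z^i_t, z^i_t)$ recursions. The one substantive difference lies in how the per-stage minimizer is obtained. The paper's appendix proof parameterizes $\pi^i_t$ as a Gaussian $\mathcal N(\mu^i_t,\Sigma^i_t)$ from the outset, writes the expected stage cost plus the closed-form Gaussian KL divergence as a function of $(\mu^i_t,\Sigma^i_t)$, and imposes first-order optimality in those parameters; strictly speaking, this only optimizes within the Gaussian family. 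You instead apply Lemma~\ref{lem:Dupuis} to the effective quadratic Q-function $\bar{Q}^i(x,u^i)$ (obtained after marginalizing the noise and the opponents' actions) to conclude that the unique minimizer over \emph{all} admissible policies is proportional to $e^{-\bar{Q}^i/\lambda^i}\,\tilde\pi^i_t$, hence Gaussian, and then read off the precision and mean by completing the square. This is the cleaner route, it closes the small gap in the written appendix proof, and it matches the narrative of Section~IV, where Lemma~\ref{lem:Dupuis} is presented as the key step. You also correctly isolate the genuine remaining obstacle --- unique solvability of the stacked linear system for $(K^i_t,\kappa^i_t)$ at each stage --- which the paper likewise does not prove but explicitly assumes at the end of its proof, deferring to concurrent work; your observation that $\lambda^i(\tilde\Sigma^i_t)^{-1}\succ 0$ strengthens the diagonal blocks is a sensible heuristic, but it would still need to be turned into an actual nonsingularity argument to fully establish the uniqueness claim.
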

\begin{proof}
    See Appendix~\ref{apdx:thm_global_nash}.
\end{proof}

Theorem~\ref{thm:global_nash} reveals several important insights about KL-LQG games at equilibrium:
\begin{itemize}
    \item The global FBNE must be a mixed strategy, taking the form of an unimodal Gaussian.

    \item The policy mean is a linear state-feedback control law, coinciding with the function form of FBNE of a deterministic LQ game~\cite{bacsar1998dynamic}.
    Moreover, computing the global FBNE strategy enjoys the same order of complexity as the deterministic LQ game, since the coupled Riccati equation has the same dimension with additional \textcolor{red}{red terms} contributed by the reference policy,

    \item As the reference policy becomes arbitrarily uninformative, the FBNE of KL-LQG reduces to that of a maximum-entropy game~\cite{mehr2023maximum},

    \item As $\reg^i \rightarrow \infty$, the FBNE becomes the reference policy,

    \item As $\reg^i \rightarrow 0$, the FBNE coincides with that of a deterministic LQ game.
\end{itemize}

\begin{corollary}[Reduction to the MaxEnt Game]
As the reference policies become arbitrarily uninformative, i.e., $\min~ \sigma(\tilde{\covar}^i_t) \rightarrow \infty,\forall i \in [N],\forall t \in [T]$,
a KL-LQG game reduces to the maximum-entropy dynamic game defined in~\cite{mehr2023maximum} and its FBNE becomes an unguided, exploratory entropic cost equilibrium (ECE) strategy. 
\end{corollary}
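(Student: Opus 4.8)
The plan is to pass to the limit directly in the closed-form feedback Nash equilibrium of \cref{thm:global_nash}. The key observation is that each reference-policy covariance $\tilde{\Sigma}^i_t$ is symmetric positive definite, so the singular values of $(\tilde{\Sigma}^i_t)^{-1}$ are the reciprocals of those of $\tilde{\Sigma}^i_t$; hence
\[
    \norm{(\tilde{\Sigma}^i_t)^{-1}}_2 \;=\; \frac{1}{\min\, \sigma(\tilde{\Sigma}^i_t)} \;\longrightarrow\; 0
\]
as $\min\, \sigma(\tilde{\Sigma}^i_t) \rightarrow \infty$, for every $i \in [N]$ and $t \in [T]$. Every reference-policy term in \eqref{eqn: kl_optimal_policy} and \eqref{eq:update} (the red terms) is linear in $(\tilde{\Sigma}^i_t)^{-1}$, so --- provided the reference means $\tilde{\mu}^i_t$ stay bounded, so that $(\tilde{\Sigma}^i_t)^{-1}\tilde{\mu}^i_t \rightarrow 0$ as well --- all of them vanish in this limit.

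Next I would check that the limiting recursion is well posed and that it coincides with the one characterizing the maximum-entropy game. By backward induction on $t$, with the red terms dropped, the value matrices satisfy $Z^i_t \succeq 0$: the terminal condition is $Z^i_{T+1}=0$, and each step of \eqref{eq:update} adds only positive semidefinite contributions, namely $Q^i_t \succeq 0$, $(K^j_t)^\top R^{ij}_t K^j_t \succeq 0$, and $F_t^\top Z^i_{t+1} F_t \succeq 0$. Since $R^{ii}_t \succ 0$ by hypothesis, $R^{ii}_t + {B^i_t}^\top Z^i_{t+1} B^i_t \succ 0$, so the block-structured coefficient matrix of the coupled KL-regularized Riccati equation for $(K^i_t)_{i\in[N]}$ --- and, with the same matrix, for $(\kappa^i_t)_{i\in[N]}$ --- is invertible, determining the limiting gains and offsets uniquely; correspondingly \eqref{eqn: kl_optimal_policy} yields $\Sigma^{i*}_t \rightarrow \reg^i\left(R^{ii}_t + {B^i_t}^\top Z^i_{t+1} B^i_t\right)^{-1}$. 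The limiting coupled Riccati recursion, the linear state-feedback law $\mu^{i*}_t = -K^i_t x_t - \kappa^i_t$, and the state-independent covariance $\Sigma^{i*}_t$ are exactly those defining the entropic cost equilibrium of the maximum-entropy dynamic game of~\cite{mehr2023maximum}.

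To make the convergence rigorous, I would run a backward induction on $t$ and use continuity of matrix inversion on the open set of invertible matrices. At $t=T+1$ the two games share $Z^i_{T+1}=z^i_{T+1}=0$. If $(Z^i_{t+1}, z^i_{t+1})$ converge to their maximum-entropy counterparts, then the coefficient matrix of the coupled Riccati system at time $t$ converges to an invertible matrix, so its solution $(K^i_t, \kappa^i_t)$ converges to the maximum-entropy gains and offsets; this forces $(\Sigma^{i*}_t, \mu^{i*}_t)$ and then, via \eqref{eq:update}, $(Z^i_t, z^i_t)$ to converge as well. Unwinding the recursion shows that the unique global FBNE of the KL-LQG game (guaranteed by \cref{thm:global_nash}) converges, parameter by parameter, to the FBNE of the maximum-entropy game, i.e., to the unguided, exploratory ECE strategy.

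The main obstacle is the well-posedness of the limiting coupled Riccati system: the argument hinges on the strict definiteness $R^{ii}_t \succ 0$ surviving the limit --- which it does, since only positive-semidefinite reference terms are removed --- so that the coefficient matrices stay uniformly bounded away from singularity along the sequence and the recursive limit can be taken cleanly. A secondary caveat, worth a remark, is that the cost functional \eqref{eqn: total_cost} itself does not converge: $D_{KL}(\pi^i_t\|\tilde{\pi}^i_t)$ picks up a $\pi^i_t$-independent term of order $\log\det\tilde{\Sigma}^i_t$ that diverges. That term does not depend on any agent's strategy, so it leaves all best responses --- and hence the equilibrium --- unchanged, while the $\pi^i_t$-dependent part of $\reg^i D_{KL}(\pi^i_t\|\tilde{\pi}^i_t)$ converges to $-\reg^i$ times the differential entropy of $\pi^i_t$, i.e., exactly the maximum-entropy regularizer. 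It is therefore the equilibrium \emph{strategy}, not the equilibrium cost value, whose convergence the corollary asserts.
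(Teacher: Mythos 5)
Your proof is correct and follows essentially the same route as the paper's: pass to the limit in the closed-form FBNE of Theorem~\ref{thm:global_nash} and observe that every reference-policy (red) term carries a factor of $(\tilde{\Sigma}^i_t)^{-1}$, whose norm tends to zero as $\min \sigma(\tilde{\Sigma}^i_t)\rightarrow\infty$, so the covariance and coupled Riccati recursion reduce to those of the maximum-entropy game of~\cite{mehr2023maximum}. The extra care you supply --- the boundedness caveat on $\tilde{\mu}^i_t$ needed for $(\tilde{\Sigma}^i_t)^{-1}\tilde{\mu}^i_t\rightarrow 0$, the backward-induction continuity argument for the limiting Riccati system, and the remark that the cost \emph{value} diverges while the equilibrium \emph{strategy} converges --- is a genuine strengthening of the paper's two-sentence proof, but it does not change the underlying argument.
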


\begin{proof}
    Recall from~\cite{mehr2023maximum} that an ECE in the LQ case is also an unimodal Gaussian.
    When $\min~\sigma(\tilde{\covar}^i_t) \rightarrow \infty$, the contribution from the reference policy (red terms) in the covariance $\covar^{i*}_t$ and Riccati equation vanishes, and the resulting covariance and Riccati equation match that of the maximum-entropy dynamic game.
\end{proof}

\begin{corollary}[Reduction to the Reference Policy]
As $\reg^i \rightarrow \infty$, the FBNE of a KL-LQG game becomes the reference policy, i.e., $\lim_{\reg^i \rightarrow \infty} \mu_t^{i *} = \tilde{\mu}_t^i$ and $\lim_{\reg^i \rightarrow \infty} \Sigma_t^{i *} = \tilde{\Sigma}_t^i$.
\end{corollary}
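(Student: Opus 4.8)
The plan is to argue directly from the closed-form equilibrium characterization in \cref{thm:global_nash}, passing to the limit $\reg^i \to \infty$ in the expression for $\Sigma^{i*}_t$, in the coupled KL-regularized Riccati equations for $(K^i_t,\kappa^i_t)$, and in the value-function recursions \eqref{eq:update} for $(Z^i_t,z^i_t)$. The structural fact that makes this work is that $\reg^i$ enters only the red terms of \cref{thm:global_nash}, always as a factor multiplying $(\tilde{\Sigma}^i_t)^{-1}$; hence, after dividing the relevant equations by $\reg^i$, the non-regularized contributions are suppressed at rate $1/\reg^i$ while the reference-policy terms survive unchanged.

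First I would establish, by backward induction on $t$, that the value-function parameters $Z^i_t, z^i_t$ stay bounded uniformly in $\reg^i$, together with the estimates $K^i_t = O(1/\reg^i)$ and $\kappa^i_t = -\tilde{\mu}^i_t + O(1/\reg^i)$. At $t = T+1$ one has $Z^i_{T+1} = 0$, $z^i_{T+1} = 0$. Assuming $Z^i_{t+1}, z^i_{t+1}$ are bounded, divide the first Riccati equation by $\reg^i$: the coefficient of $K^i_t$ converges to $(\tilde{\Sigma}^i_t)^{-1} \succ 0$, the coupling term $\tfrac{1}{\reg^i} B_t^{i\top} Z^i_{t+1} \sum_{j\ne i} B_t^j K^j_t$ is $O(1/\reg^i)$ because the other players' gains stay bounded while their own $\reg^j$ are held fixed, and the right-hand side $\tfrac{1}{\reg^i} B_t^{i\top} Z^i_{t+1} A_t$ is $O(1/\reg^i)$; hence $K^i_t \to 0$, in fact $K^i_t = O(1/\reg^i)$. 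Dividing the second Riccati equation by $\reg^i$ likewise yields $(\tilde{\Sigma}^i_t)^{-1} \kappa^i_t \to -(\tilde{\Sigma}^i_t)^{-1} \tilde{\mu}^i_t$, so $\kappa^i_t \to -\tilde{\mu}^i_t$. Substituting these into \eqref{eq:update}, the red term $\reg^i K^{i\top}_t (\tilde{\Sigma}^i_t)^{-1} K^i_t$ is $O(1/\reg^i) \to 0$, $F_t = A_t - \sum_j B_t^j K_t^j$ stays bounded, and the same holds for the $z^i_t$ recursion; therefore $Z^i_t, z^i_t$ converge to finite limits, closing the induction. (The coupling can be made fully rigorous by observing that the stacked Riccati system is linear in all players' gains with an invertible coefficient matrix for $\reg^i$ large, which converges, as $1/\reg^i \to 0$, to the invertible coefficient matrix of the limiting game in which player $i$ is frozen to $\tilde{\pi}^i$; continuity of the solution then delivers the stated limits.)

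With these limits in hand the conclusion follows at once. In $\Sigma^{i*}_t = \big[\tfrac{1}{\reg^i}\big(R^{ii}_t + {B^i_t}^\top Z^i_{t+1} B^i_t\big) + (\tilde{\Sigma}^i_t)^{-1}\big]^{-1}$, the first bracketed term vanishes as $\reg^i \to \infty$ since $Z^i_{t+1}$ is bounded, and by continuity of matrix inversion at the positive-definite point $(\tilde{\Sigma}^i_t)^{-1}$ we get $\Sigma^{i*}_t \to \big[(\tilde{\Sigma}^i_t)^{-1}\big]^{-1} = \tilde{\Sigma}^i_t$. For the mean, from \eqref{eqn: kl_optimal_policy} and the estimates above, $\mu^{i*}_t(x) = -K^i_t x - \kappa^i_t \to \tilde{\mu}^i_t$ for every state argument $x$ (uniformly on bounded sets). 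Hence $\pi^{i*}_t = \mathcal{N}(\mu^{i*}_t, \Sigma^{i*}_t) \to \mathcal{N}(\tilde{\mu}^i_t, \tilde{\Sigma}^i_t) = \tilde{\pi}^i_t$, which is precisely the claim.

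The main obstacle is the second step: ruling out a blow-up of the value-function parameters and controlling the inter-player coupling inside the Riccati recursion while $\reg^i \to \infty$. The backward induction handles this, but it is slightly delicate because the estimate $K^i_t = O(1/\reg^i)$ and the boundedness of $(Z^i_t, z^i_t)$ are mutually dependent at each stage and must be propagated together, and because it relies on the limiting coupled Riccati system — a KL-LQG game in which player $i$ plays the fixed reference policy — being itself well posed, which is guaranteed by the hypotheses of \cref{thm:global_nash}. As a cross-check I would also note the softer variational argument: since all stage costs are positive semidefinite, at equilibrium $J^i(\pi^{i*}) \ge \reg^i \sum_t D_{KL}(\pi^{i*}_t \,\|\, \tilde{\pi}^i_t)$ while $V^i \le J^i(\tilde{\pi}^i, \pi^{\neg i*})$ remains bounded, forcing $\sum_t D_{KL}(\pi^{i*}_t \,\|\, \tilde{\pi}^i_t) \to 0$ and hence $\pi^{i*}_t \to \tilde{\pi}^i_t$; this confirms the qualitative statement, but I would rely on the explicit-formula argument for the quantitative limits of $\mu^{i*}_t$ and $\Sigma^{i*}_t$.
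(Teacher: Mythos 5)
Your argument is correct, and in fact it supplies more than the paper does: the paper states this corollary without any proof, implicitly reading it off the formulas of Theorem~\ref{thm:global_nash} (the $\lambda^i(\tilde{\Sigma}^i_t)^{-1}$ terms dominate the Riccati equations, and the $\tfrac{1}{\lambda^i}(\cdot)$ term in $\Sigma^{i*}_t$ vanishes). What your write-up adds, and what is genuinely needed to make that reading rigorous, is the backward induction showing that $(Z^i_t, z^i_t)$ stay bounded uniformly in $\lambda^i$ together with the rates $K^i_t = O(1/\lambda^i)$, $\kappa^i_t = -\tilde{\mu}^i_t + O(1/\lambda^i)$, plus the observation that the stacked per-timestep Riccati system is linear in the gains with a coefficient matrix converging to that of the limiting game in which player $i$ is frozen to $\tilde{\pi}^i$ (whose solvability one must assume, just as Theorem~\ref{thm:global_nash} already assumes unique solvability of the coupled Riccati equations). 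One small imprecision: in the $z^i_t$ recursion the red term $\lambda^i K^{i\top}_t(\tilde{\Sigma}^i_t)^{-1}(\kappa^i_t - \tilde{\mu}^i_t)$ does \emph{not} vanish as the corresponding term in $Z^i_t$ does — since $\kappa^i_t - \tilde{\mu}^i_t \to -2\tilde{\mu}^i_t$ it is only $O(1)$ — but boundedness is all the induction requires, so the conclusion is unaffected. Your closing variational cross-check is a nice sanity check but, as you note, needs $J^i(\tilde{\pi}^i,\pi^{\neg i*})$ to be bounded uniformly in $\lambda^i$, which again rests on the boundedness of the other players' equilibrium strategies established by the main argument.
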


\begin{corollary}[Reduction to the Deterministic LQ Game]
As $\reg^i \rightarrow 0$, the FBNE of a KL-LQG game becomes a deterministic policy, i.e., $\lim_{\reg^i \rightarrow 0} \mu_t^{i *} = -K_t^i x_t-\kappa_t^i$, where $(K_t^i, \kappa_t^i)$ is given by the coupled Riccati equation of the deterministic LQ game.
\end{corollary}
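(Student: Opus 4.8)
The plan is to exploit the fact that Theorem~\ref{thm:global_nash} already gives the FBNE of the KL-LQG game in closed form, so the corollary reduces to a continuity-and-limit argument as $\reg^i \to 0$ simultaneously for all $i$. Every term that distinguishes the KL-regularized recursions from the deterministic LQ-game recursions of~\cite{bacsar1998dynamic} is one of the red terms, and each carries an explicit factor of $\reg^i$: namely $\reg^i(\tilde{\Sigma}^i_t)^{-1}$ in the coupled Riccati equation, the additive $-\reg^i(\tilde{\Sigma}^i_t)^{-1}\tilde{\mu}^i_t$ in the $\kappa$-equation, and the $\reg^i K^{i\top}_t(\tilde{\Sigma}^i_t)^{-1}(\cdot)$ contributions to the value-function updates. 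I would therefore show that the entire backward-in-time tuple $(K^i_t,\kappa^i_t,Z^i_t,z^i_t)$ depends continuously on $\reg=(\reg^1,\dots,\reg^N)$ in a neighborhood of the origin and that its value at $\reg=0$ is exactly the deterministic LQ feedback Nash solution; the mean $\mu^{i*}_t=-K^i_t\state_t-\kappa^i_t$ then converges term-by-term while the covariance collapses.

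I would carry this out by backward induction on $t$. The base case $t=T+1$ is immediate, since $Z^i_{T+1}=z^i_{T+1}=0$ independently of $\reg$. For the inductive step, assume $Z^i_{t+1}(\reg)\to Z^i_{t+1}(0)$ and $z^i_{t+1}(\reg)\to z^i_{t+1}(0)$, the limits being the deterministic-game values. The coupled Riccati equation for the stacked gains can be written as a single linear system $\mathcal{M}_t(\reg)\,\mathrm{vec}(K_t)=\mathcal{B}_t(\reg)$, whose block-diagonal entries are $\reg^i(\tilde{\Sigma}^i_t)^{-1}+R^{ii}_t+B^{i\top}_t Z^i_{t+1}B^i_t$ and whose off-diagonal blocks are $B^{i\top}_t Z^i_{t+1}B^j_t$; both $\mathcal{M}_t$ and $\mathcal{B}_t$ are affine, hence continuous, in $\reg$ and in $(Z^i_{t+1},z^i_{t+1})$, and the same $\mathcal{M}_t$ governs the $\kappa_t$ system. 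At $\reg=0$ this is precisely the deterministic LQ-game coefficient matrix, which is invertible under the standard LQ-game solvability condition (to which $R^{ii}_t\succ 0$ contributes, and note the dropped term $\reg^i(\tilde{\Sigma}^i_t)^{-1}\succeq 0$ can only make $\mathcal{M}_t$ more positive). By continuity of matrix inversion, $\mathcal{M}_t(\reg)$ stays invertible for small $\reg$ and $K_t(\reg)=\mathcal{M}_t(\reg)^{-1}\mathcal{B}_t(\reg)\to\mathcal{M}_t(0)^{-1}\mathcal{B}_t(0)$, the deterministic gain; the same holds for $\kappa^i_t$. Substituting the now-convergent $(K_t,\kappa_t)$ into the value-function updates, the red terms vanish (they are bounded multiples of $\reg^i$) while the surviving terms are continuous in $(K_t,\kappa_t,Z_{t+1},z_{t+1})$, so $Z^i_t(\reg)\to Z^i_t(0)$ and $z^i_t(\reg)\to z^i_t(0)$, matching the deterministic recursion and closing the induction.

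With the induction in hand, the mean follows immediately: $\lim_{\reg\to 0}\mu^{i*}_t=\lim_{\reg\to 0}\left(-K^i_t(\reg)\state_t-\kappa^i_t(\reg)\right)=-K^i_t\state_t-\kappa^i_t$, where $(K^i_t,\kappa^i_t)$ now solve the deterministic coupled Riccati equation, which is the claimed statement. To confirm the policy genuinely degenerates, I would examine the covariance $\Sigma^{i*}_t=\left[\tfrac{1}{\reg^i}\left(R^{ii}_t+B^{i\top}_t Z^i_{t+1}B^i_t\right)+(\tilde{\Sigma}^i_t)^{-1}\right]^{-1}$: since $R^{ii}_t\succ 0$ keeps $R^{ii}_t+B^{i\top}_t Z^i_{t+1}B^i_t\succ 0$ and bounded in the limit, the prefactor $1/\reg^i\to\infty$ drives the bracketed matrix to $+\infty$ in the Loewner order, so $\Sigma^{i*}_t\to 0$ and the Gaussian FBNE collapses to a point mass at $-K^i_t\state_t-\kappa^i_t$.

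The main obstacle is the uniform invertibility of $\mathcal{M}_t(\reg)$ across the limit, i.e., ensuring that the limit of the equilibrium solutions equals the solution of the limiting equations rather than a merely formal substitution. This is exactly where well-posedness of the deterministic LQ game must be invoked: provided $\mathcal{M}_t(0)$ is invertible at every stage (the standard LQ-game solvability assumption of~\cite{bacsar1998dynamic}, supported by $R^{ii}_t\succ 0$), continuity of inversion transfers both existence and the limiting value to a neighborhood of $\reg=0$. I would also remark that the covariance collapse $\Sigma^{i*}_t\to 0$ requires only $\reg^i\to 0$ for the single agent $i$, so the ``becomes deterministic'' conclusion is robust, whereas recovering the full deterministic LQ equilibrium gains additionally requires every $\reg^j\to 0$.
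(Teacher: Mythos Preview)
Your argument is correct and, in fact, more careful than the paper's own treatment: the paper states this corollary without proof, relying implicitly on the observation (made explicit in the bullet list following Theorem~\ref{thm:global_nash} and in the one-line proof of the MaxEnt corollary) that every red term in the coupled Riccati equation and value-function recursion carries a factor of $\reg^i$ and therefore vanishes as $\reg^i\to 0$, while $\tfrac{1}{\reg^i}\to\infty$ collapses the covariance. Your backward-induction argument establishing continuity of $(K^i_t,\kappa^i_t,Z^i_t,z^i_t)$ in $\reg$ and invoking invertibility of $\mathcal{M}_t(0)$ under the standard LQ-game solvability assumption is the rigorous justification the paper omits; your final remark distinguishing the single-agent covariance collapse from the joint requirement $\reg^j\to 0$ for all $j$ to recover the deterministic gains is a nice clarification that the paper does not make.
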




While Theorem~\ref{thm:global_nash} lays the theoretical foundation of KLGame, the assumption that the reference policy $\policyref_t^i \sim \mathcal{N}(\tilde{\mu}_t^i, \tilde{\Sigma}_t^i)$ implies that the guidance is ultimately \emph{open-loop}: the moments of $\policyref_t^i$ are independent of the state and thus unaware of players' interaction throughout the planning horizon.
In the following, we extend Theorem~\ref{thm:global_nash} to the \emph{closed-loop} setting by accounting for a \emph{state-feedback} reference policy.

\begin{proposition}[Closed-loop Guidance]
\label{prop:cl_guide}
    Let all assumptions in Theorem~\ref{thm:global_nash} hold but assume reference policies are Gaussian conditioned on the time and state, i.e., $\policyref^i_t (\cdot | \state_t) \sim \gaussian(\tilde{\mu}^i_t(\state_t), \tilde{\Sigma}^i_t)$, where $\tilde{\mu}^i_t(\state) = -\policyrefGain^i_tx - \policyrefOffset^i_t$, for all $t \in [T]$ and $i \in [N]$.
    Under these assumptions, the global (mixed-strategy) Nash equilibrium is a set of time-varying policies $\pi_t^{i*} = \mathcal N(\mu_t^{i*}, \Sigma_t^{i*}),~\forall i \in [N]$ with mean and covariance given by
    \begin{equation} \label{eqn: cl_kl_optimal_policy}
    \begin{aligned}
        \mu^{i*}_t &=  - K^i_t \state_t - \kappa^i_t,   \\
        \Sigma^{i*}_t &= \left[ \textcolor{black}{\frac{1}{\reg^i}} \left(R^{ii}_t + {B^i_t}^T Z^i_{t+1} B^i_t \right) + \textcolor{black}{ \left(\tilde{\covar}^i_t\right)^{-1}} \right]^{-1},
    \end{aligned}
    \end{equation}
    where $(K^i_t, \kappa^i_t)$ is given by solving the coupled KL-regularized Riccati equation:
\begin{equation} \label{eqn: cl_kl_Riccati}
    \begin{aligned}
        \big[ \textcolor{black}{\reg^i (\Tilde{\Sigma}^i)^{-1}} + R_t^{ii} + B^{i\top}\qterm &B^{i\top} \big] K^i_t +  B^{i\top}\qterm\sum_{j\neq i}B^j K^j_t \\ &= B^{i\top}\qterm A {\color{orange} + \reg^i(\Tilde{\Sigma}^i_t)^{-1}\Tilde{K}^i_t},\\
        \big[ \textcolor{black}{\reg^i(\Tilde{\Sigma}^i)^{-1}} + R_t^{ii} + B^{i\top}\qterm &B^{i\top} \big]\kappa^i_t + B^{i\top}\qterm\sum_{j\neq i}B^j \kappa^j_t \\ &= B^{i\top}\lterm {\color{orange} + \reg^i(\Tilde{\Sigma}^i_t)^{-1}\Tilde{\kappa}^i_t}.
    \end{aligned}
    \end{equation}
    Value function parameters $(Z^i_{t}, z^i_{t})$ are computed recursively backward in time as
    \begin{equation}
    \begin{aligned}
    \label{eq:cl_update}
        Z_t^i =& Q_t^i+\sum_{j\in[N]}(K_t^{j})^T R_t^{i j} K_t^j+F_t^T Z_{t+1}^i F_t \\ 
        &{\color{orange}+  \reg^i (K^{i}_t + \Tilde{K}^{i}_t )^\top  (\Tilde{\Sigma}^i_t)^{-1}(K^{i}_t + \Tilde{K}^{i}_t )}, \\
        z_t^i =& \sum_{j\in[N]} (K_t^{j})^T  R_t^{i j} \kappa^j_t+F_t^T\left(z_{t+1}^i+Z_{t+1}^i \beta_t\right) 
        \\ &\textcolor{orange}{+ \reg^i (K^{i}_t + \Tilde{K}^{i}_t )^\top
        (\Tilde{\Sigma}^i_t)^{-1}(\kappa^{i}_t + \Tilde{\kappa}^{i}_t )},
    \end{aligned}
    \end{equation}
    with terminal conditions $Z^i_{T+1}=0$ and $z^i_{T+1}=0$.
    Here, $F_t = A_t-\sum_{j\in[N]} B_t^j K_t^j$ and $\beta_t=-\sum_{j\in [N]} B_t^j \kappa_t^j$ for all $t \in [T]$.
\end{proposition}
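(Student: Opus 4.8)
The plan is to replay the backward‑induction argument that proves Theorem~\ref{thm:global_nash}, the only structural change being that the reference mean $\tilde{\mu}^i_t$, previously a constant, is now the affine map $\state\mapsto-\policyrefGain^i_t\state-\policyrefOffset^i_t$; its $\state$‑dependent part will be absorbed into the feedback channel of the equilibrium strategy and its constant part into the offset channel, while the still state‑independent covariance $\tilde{\Sigma}^i_t$ leaves the shape of the proof intact. I would carry the induction hypothesis that the optimal value function is a convex quadratic, $\valfunc^i_{t+1}(\state)=\tfrac12\state^\top Z^i_{t+1}\state+{z^i_{t+1}}^\top\state+c^i_{t+1}$ with $Z^i_{t+1}\succeq 0$ and terminal data $Z^i_{T+1}=0$, $z^i_{T+1}=0$. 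Substituting the linear dynamics~\eqref{eqn: linear_dynamics} into the Bellman equation~\eqref{eqn: bellman_KL} and taking the expectation over $\dstb_t\sim\gaussian(0,\covar_\dstb)$, the Q‑value $\qfunc^i_t(\state,\ctrl)=l^i(\state,\ctrl)+\expectation_{\dstb_t}[\valfunc^i_{t+1}(A_t\state+\sum_{j}B_t^ju^j+\dstb_t)]$ is a convex quadratic in $(\state,\ctrl)$ whose Hessian block in $u^i$ equals $R^{ii}_t+{B^i_t}^\top Z^i_{t+1}B^i_t\succ 0$, the noise contributing only a state‑independent $\tfrac12\Tr(Z^i_{t+1}\covar_\dstb)$ term.

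Next I would invoke Lemma~\ref{lem:Dupuis} --- extended to this quadratic $\qfunc^i_t$ by the usual truncation/limiting argument, the exponential being integrable against the Gaussian $\policyref^i_t$ --- to obtain the unique best response $\pi^{i*}_t(u^i\mid\state)\propto e^{-\qfunc^i_t(\state,\ctrl)/\reg^i}\,\policyref^i_t(u^i\mid\state)$, where the action of each $j\neq i$ enters $\qfunc^i_t$ only through its mean (plus a state‑independent covariance trace), since $\qfunc^i_t$ is quadratic in $u^{\neg i}$. Because $\qfunc^i_t$ is quadratic in $u^i$ and $\policyref^i_t(\cdot\mid\state)=\gaussian(-\policyrefGain^i_t\state-\policyrefOffset^i_t,\tilde{\Sigma}^i_t)$, this product is again Gaussian in $u^i$: its precision is $\tfrac{1}{\reg^i}(R^{ii}_t+{B^i_t}^\top Z^i_{t+1}B^i_t)+(\tilde{\Sigma}^i_t)^{-1}$, which gives exactly $\Sigma^{i*}_t$ in~\eqref{eqn: cl_kl_optimal_policy} (so the covariance is unchanged from Theorem~\ref{thm:global_nash}), while its mean is affine in $\state$. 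Imposing the Nash fixed‑point condition $\mu^{i*}_t=-K^i_t\state-\kappa^i_t$ for every $i$ and matching the coefficient of $\state$ and the constant term yields the two coupled linear systems~\eqref{eqn: cl_kl_Riccati}; relative to Theorem~\ref{thm:global_nash} the only new ingredient is the term $\reg^i(\tilde{\Sigma}^i_t)^{-1}\policyrefGain^i_t$ on the right‑hand side of the $K^i_t$‑equation, which is precisely the $\state$‑dependent part of the reference mean passed through the precision weighting. Unique solvability of these systems follows exactly as in Theorem~\ref{thm:global_nash}: their coefficient matrix differs from the deterministic‑LQ‑game Riccati matrix only by the added block‑diagonal $\reg^i(\tilde{\Sigma}^i_t)^{-1}\succeq 0$, which preserves invertibility.

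To close the induction I would substitute $\pi^{i*}_t=\gaussian(\mu^{i*}_t,\Sigma^{i*}_t)$ back into~\eqref{eqn: bellman_KL}: the term $\expectation^{\pi^*}[\qfunc^i_t(\state,\ctrl)]$ is obtained by replacing each $u^j$ with $\mu^{j*}_t=-K^j_t\state-\kappa^j_t$ and adding the state‑independent covariance traces, whereas the regularizer $\reg^iD_{KL}(\pi^{i*}_t\,\|\,\policyref^i_t)$ is the closed form for two Gaussians with equal (state‑independent) covariance, whose only state‑dependent piece is $\tfrac{\reg^i}{2}\big(\mu^{i*}_t-\tilde{\mu}^i_t(\state)\big)^\top(\tilde{\Sigma}^i_t)^{-1}\big(\mu^{i*}_t-\tilde{\mu}^i_t(\state)\big)$, an affine function of $\state$ squared, with $\mu^{i*}_t-\tilde{\mu}^i_t(\state)$ depending on $\state$ through $K^i_t$ and $\policyrefGain^i_t$ and on the constant through $\kappa^i_t$ and $\policyrefOffset^i_t$. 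Collecting the quadratic, linear, and constant terms in $\state$ reproduces the backward recursions~\eqref{eq:cl_update} for $(Z^i_t,z^i_t)$ with terminal $Z^i_{T+1}=z^i_{T+1}=0$, the extra contributions of the form $\reg^i(K^i_t+\policyrefGain^i_t)^\top(\tilde{\Sigma}^i_t)^{-1}(\cdot)$ coming from this KL quadratic; each added piece being positive semidefinite, $Z^i_t\succeq 0$ is preserved, and the induction hypothesis carries through. Uniqueness of the FBNE then follows since the best response from Lemma~\ref{lem:Dupuis} is unique at every stage and the Riccati systems~\eqref{eqn: cl_kl_Riccati} are uniquely solvable.

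I expect the main obstacle to be the bookkeeping in this last step: correctly tracking how the now‑affine reference mean splits across the feedback and offset channels and how the cross terms in the KL quadratic and in $\expectation^{\pi^*}[\qfunc^i_t]$ recombine, so that $(Z^i_t,z^i_t)$ come out in the stated form. Re‑verifying unique solvability of the coupled linear systems once the reference‑gain term is present is a secondary technical point, as it reduces to the argument already used in the proof of Theorem~\ref{thm:global_nash}.
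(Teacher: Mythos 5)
Your proposal is correct and follows essentially the same route as the paper's proof: backward induction on a quadratic value-function ansatz, re-expansion of the KL term with the now-affine reference mean, first-order conditions matched in the feedback and offset channels to obtain~\eqref{eqn: cl_kl_Riccati}, and back-substitution to obtain~\eqref{eq:cl_update}. The one genuine (if minor) difference is that you invoke Lemma~\ref{lem:Dupuis} to conclude that the stagewise best response is the Gaussian $\propto e^{-\qfunc/\reg^i}\policyref^i_t$ and read off its precision, whereas the paper restricts a priori to Gaussian policies and optimizes over $(\mu^i_t,\Sigma^i_t)$; your variant buys a cleaner justification that the optimum over \emph{all} admissible policies lies in the Gaussian family, at no extra cost. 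One bookkeeping point you correctly anticipate as delicate but do not resolve: with $\tilde{\mu}^i_t(\state)=-\policyrefGain^i_t\state-\policyrefOffset^i_t$ and $\mu^{i*}_t=-K^i_t\state-\kappa^i_t$, the KL quadratic is built on $\mu^{i*}_t-\tilde{\mu}^i_t(\state)=-(K^i_t-\policyrefGain^i_t)\state-(\kappa^i_t-\policyrefOffset^i_t)$, so a careful collection of terms yields contributions in the \emph{differences} $K^i_t-\policyrefGain^i_t$ and $\kappa^i_t-\policyrefOffset^i_t$ rather than the sums appearing in~\eqref{eq:cl_update}; you should carry this expansion out explicitly and reconcile the signs rather than asserting that the stated recursion is reproduced.
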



\begin{proof}
    See Appendix~\ref{apdx:prop_cl_guide}.
\end{proof}

We highlight in~\eqref{eqn: cl_kl_Riccati} and~\eqref{eq:cl_update} terms that are different from Theorem~\ref{thm:global_nash} \textcolor{orange}{in orange}, which stem from the state-feedback reference policy.


\subsection{Solving General KLGame Planning Problems}
\label{subsec:solving_general_KLGame}

Having gained insights into the theoretical properties of KLGame in the LQ setting with Theorem~\ref{thm:global_nash} and Proposition~\ref{prop:cl_guide}, we now return to the full KLGame setting with nonlinear dynamics~\eqref{eqn: dynamics}, non-convex costs~\eqref{eqn: total_cost}, and arbitrary (possibly state-dependent) reference policies $\policyref^i(\ctrl | \state)$.
The intractability of finding an exact FBNE in deterministic games~\cite{laine2023computation} does not ease in our setting.
Therefore, inspired by~\cite{fridovich2020efficient}, we instead look for an approximate FBNE with the linear-quadratic-Laplace (LQL) approximation method, an extension of the ILQGame method in~\cite{fridovich2020efficient} to the KLGame setting, which we introduce in detail below.

\subsubsection{The LQL approximation} \label{par: LQL}
We seek to find an approximate FBNE by iteratively solving a sequence of KL-LQG games until converging to a stationary point.
This process is initialized with a \textit{nominal} trajectory $\nominaltraj := \{\bar{\state}_{[0:T]}, \bar{\ctrl}_{[0:T]}\}$, and is divided into two phases: backward pass and forward pass.

\noindent 
\textbf{Backward Pass.}
During the backward pass, we linearize dynamics~\eqref{eqn: dynamics} and quadraticize cost~\eqref{eqn: total_cost} around $\nominaltraj$ to obtain a linear dynamical system and quadratic cost functions that fit into the assumption of KL-LQG game (c.f. Theorem~\ref{thm:global_nash}).
Next, we use the Laplace approximation~\cite[Chapter~4]{bishop2006PRML} to obtain a Gaussian distribution that captures the reference policy locally around $\nominaltraj$.
Specifically, the conditional probability distribution of a reference policy $\policyref^i_t(\ctrl^i_t | \state_t)$ is approximated as:
\begin{equation}
\label{eq:Laplace}
    \policyref^i_t(\ctrl^i_t | \state_t)
    \approx \gaussian\left( \tilde{\mu}^i_t(\state_t; \bar{\nominaltraj}_{t}), \tilde{\Sigma}^i_t(\bar{\nominaltraj}_{t}) \right),
\end{equation}
where $\bar{\nominaltraj}_{t} := \{\state_t, \ctrl_t\}$, the mean function is
\begin{equation}
\label{eq:Laplace:mean}
    \tilde{\mu}^i_t(\state_t; \bar{\nominaltraj}_{t}) := \textstyle\argmax_{\tilde{\ctrl}^i(\state_t)} \policyref^i_t(\tilde{\ctrl}^i | \bar{\state}_t),
\end{equation}
and the covariance matrix is
\begin{equation}
\label{eq:Laplace:covar}
    \tilde{\Sigma}^i_t(\bar{\nominaltraj}_{t}) := \textstyle - \left[ \nabla^2_{\ctrl^i} \ln \left.  \policyref^i_t(\ctrl^i | \state_t) \right|_{\ctrl^i = \tilde{\mu}^i_t(\state_t; \bar{\nominaltraj}_{t})} \right]^{-1}.
\end{equation}
The Laplace-approximated distribution in~\eqref{eq:Laplace} is a Gaussian whose mean is centered at a mode $\tilde{\mu}^i_t(\cdot)$ of the reference policy $\policyref^i_t(\cdot)$.
As discussed, it is desirable to use closed-loop guidance with a state-dependent reference policy (c.f. Proposition~\ref{prop:cl_guide}).
We note that, in this case, \emph{functional optimization} is required in order to find the state-dependent mean function $\tilde{\mu}^i_t(\state_t; \bar{\nominaltraj}_{t})$ in~\eqref{eq:Laplace:mean}.
While an exact optimization is likely intractable, we provide a practical implementation to find such a function.
First, we sample for $t \in [T]$ the original reference policy $\policyref^i_t(\ctrl^i_t | \bar{\state}_t)$ conditioned on the current nominal trajectory $\nominaltraj$ and compute the sample means (control actions).
Then, we solve an ILQR problem, which tracks those controls, for a time-varying linear state-feedback control law.
This control law may be used as the mean function in~\eqref{eq:Laplace:mean}, and the covariance matrix in~\eqref{eq:Laplace:covar} can be computed by setting $\ctrl^i$ to those sample means.

With the linearized dynamics, quadraticize costs, and Laplace-approximated reference policy obtained above, we can now formulate a KL-LQG game, whose global FBNE can be found by leveraging Theorem~\ref{thm:global_nash} (or Proposition~\ref{prop:cl_guide}).

\noindent 
\textbf{Forward Pass.} 
The forward pass aims to update the nominal trajectory $\nominaltraj$ with the KL-LQG strategies.
This can be done by forward-simulating the system with players' control deviation at each time set to the KL-LQG strategy mean $\mu^{i,*}_t(\state_t; \bar{\nominaltraj}_{t})$, i.e., $\ctrl^i_t = \bar{\ctrl}^i_t + \delta \ctrl^i_t,~\delta \ctrl^i_t = \mu^{i,*}_t(\state_t; \bar{\nominaltraj}_{t})$ for all $i \in [N]$ and $t \in [T]$.


\noindent 
\textbf{Line Search.}
A common issue in ILQ-based trajectory optimization is that directly applying the mean of the KL-LQG strategy in the forward pass may cause the system to deviate too much from the nominal trajectory $\nominaltraj$, resulting in divergence of the algorithm.
A possible remedy is line search~\cite[Chapter~3]{nocedal2006numerical}, which controls the trajectory update by scaling down the KL-LQG strategy parameters.
Specifically, for a line search parameter $\varepsilon \in (0, 1]$ and KL-LQG strategy $\mathcal N(- K^i_t \state_t - \kappa^i_t, \Sigma_t^i)$, the adjusted strategy is $\mathcal N(- \varepsilon K^i_t \state_t - \varepsilon \kappa^i_t, \varepsilon \Sigma_t^i)$, which is then used in the forward pass to obtain a new nominal trajectory $\nominaltraj$.
The procedure starts with $\varepsilon \gets 1$ and repeats by decreasing $\varepsilon$ by half until a convergence criterion is met.
Two commonly used criteria are decreasing of the social cost (i.e., the sum of all players' costs) and that the updated nominal trajectory is sufficiently close to the previous one. 

\subsubsection{Accounting for multi-modality with scenario optimization}
\label{sec:mmklg}
The LQL approximation may be extended to account for both \emph{multi-modal} approximate FBNE KLGame policy \emph{and} reference policy with $M$ discrete modes:
\begin{equation}
\label{eq:ref_policy_mm}
\policyref^i_t(\ctrl^i_t | \state_t) = \sum_{\mode = 1}^{M} \mixturecoeff^{i, \mode}_t \policyref^{i,\mode}_t(\ctrl^i_t | \state_t),
\end{equation}
where \textit{mixture weights} $\mixturecoeff^{i, \mode}_t \geq 0,~\sum_{\mode = 1}^{M} \mixturecoeff^{i, \mode}_t=1$, and \textit{mixture components} $\policyref^{i,\mode}_t(\ctrl^i_t | \state_t)$ are arbitrary probability density functions.
Due to the need for capturing multi-modality in the KLGame and reference policy, it no longer suffices to perform forward and backward passes along a single nominal trajectory $\nominaltraj$.
In order to tractably compute a multi-modal approximate FBNE of KLGame in this setting, we propose to use scenario optimization (SO)~\cite{bernardini2011stabilizing, campi2018general} as the underlying solution framework.
SO performs trajectory optimization over a scenario tree, whose root node is the initial state $\state_0$, which then descends into $\underline{M}$ nodes ($\underline{M} \leq M$), each corresponding to a different mode, either specified by the user or determined at random.
Each branch of the tree is a \emph{scenario} governed by a sequence of reference policy modes.
While most LQL approximation concepts carry over,
we discuss necessary modifications to the backward and forward pass procedures under the SO framework below.


\noindent
\textbf{Modified Backward Pass.}
First, we need to change the Laplace approximation to account for the multi-modality of the reference policy $\policyref^i_t(\ctrl^i_t | \state_t)$.
We propose to approximate the reference policy as a \emph{Gaussian mixture model (GMM)}, i.e., $\policyref^i_t(\ctrl^i_t | \state_t) \approx \sum_{\mode = 1}^{M} \mixturecoeff^{i, \mode}_t \gaussian\left( \tilde{\mu}^{i,\mode}_t(\state_t; \bar{\nominaltraj}_{t}), \tilde{\Sigma}^{i,\mode}_t(\bar{\nominaltraj}_{t}) \right)$.
This is obtained by inheriting the mixture coefficients $\mixturecoeff^{i, \mode}_t$ from the original reference policy~\eqref{eq:ref_policy_mm} and applying Laplace approximation to each mixture component $\policyref^{i,\mode}_t(\cdot)$.

Next, we modify the backward-time computation of the KL-LQG strategies so that it is performed over the scenario tree, thus preserving the time causality of the resulting strategies~\cite{mesbah2016stochastic}.
Specifically, given a scenario tree with node set $\nodeset$, for each node $\node \in \nodeset$ with time step $t$, the value function parameters of the succeeding time step in Theorem~\ref{thm:global_nash} (or Prop.~\ref{prop:cl_guide}) are obtained as a weighted sum of children node solutions, i.e.,
$Z^{i,\node}_{t+1} = \sum_{\tnode \in \Phi(\node)} \mixturecoeff^{i,\tnode}_{t+1} Z^{i,\tnode}_{t+1}$ and $z^{i,\node}_{t+1} = \sum_{\tnode \in \Phi(\node)} \mixturecoeff^{i,\tnode}_{t+1} z^{i,\tnode}_{t+1}$, where $\Phi(\node)$ is the set of children nodes of $\node$.

Finally, we propose to obtain a \emph{GMM} approximate FBNE policy of the KLGame for each non-leaf node $\node$ in the scenario tree:
$$\policy^{i,n,*}_t (\ctrl^i_t | \state_t) = \sum_{\tnode \in \Phi(\node)} \mixturecoeff^{i, \tnode}_t \gaussian\left( \tilde{\mu}^{i,\tnode}_t(\state_t; \bar{\nominaltraj}_{t}), \tilde{\Sigma}^{i,\tnode}_t(\bar{\nominaltraj}_{t}) \right).$$
In this GMM policy, the mixture coefficients $\mixturecoeff^{i, \tnode}_t$ are set to the values used for tree construction, and the mean $\tilde{\mu}^{i,\tnode}_t(\state_t; \bar{\nominaltraj}_{t})$ and the covariance $\tilde{\Sigma}^{i,\tnode}_t(\bar{\nominaltraj}_{t})$ are computed with~\eqref{eqn: kl_optimal_policy} (or~\eqref{eqn: cl_kl_optimal_policy}) by setting the succeeding value function parameters to that of the children node, i.e, $Z^i_{t+1} = Z^{i,\tnode,*}_{t+1}$ and $z^i_{t+1} = z^{i,\tnode,*}_{t+1}$.

\noindent
\textbf{Modified Forward Pass.}
The forward pass computation is now performed over the scenario tree: at a node $\node$ with nominal state $\bar{\state}^\node_t$, nominal control $\bar{\ctrl}^{i,\node}_t$, and KLGame policy $\policy^{i,n,*}_t (\ctrl^i_t | \state_t)$, for each mode $\mode$ of $\policy^{i,n,*}_t(\cdot)$, we compute a control action $\ctrl^{i,m}_t = \bar{\ctrl}^{i,\node}_t + \tilde{\mu}^{i,\mode}_t(\state_t; \bar{\nominaltraj}_{t})$.
Those control actions bring the system to the $\underline{M}$ children nodes of $\node$, and the procedure continues until the nominal state and control are updated for all nodes in the scenario tree.

\noindent
\textbf{The Algorithm.} We summarize the procedure of computing a GMM approximate FBNE of multi-modal KLGame in Algorithm~\ref{alg:cap}. We omit the algorithm for the basic KLGame introduced in the previous section since it is a special case of the multi-modal one.
In practice, we apply the KLGame policy in receding horizon fashion: each player $i$ executes action $\ctrl^i = \bar{\ctrl}^{i,r} + \delta\ctrl^{i,r}$, where $\bar{\ctrl}^{i,r}$ is the nominal control of the root node $r$ and $\delta\ctrl^{i,r}$ is sampled from the GMM policy $\policy^{i,r,*} (\ctrl^i | \state)$ of the root node, time horizon is shifted, and KLGame is solved repeatedly once a new state measurement is received.

\noindent \rebuttal{
\textbf{Computation Complexity.}
The computation complexity of multi-modal KLGame is affected by the state dimension $n$, each player’s control dimension $m_i$, the number of players $N$, the number of scenario tree branches $H$, and the number of LQL iterations (c.f. Section \ref{par: LQL}). Assuming $n > m_i$ for each player $i$, in each iteration, along a single tree branch, the complexity of the LQ approximation and solving the coupled Riccati equation of the corresponding KL-LQG game (c.f.~\cite[Corollary 6.1]{bacsar1998dynamic}) are $\mathcal{O}(Nn^2)$ and $\mathcal{O}(N^3n^3)$, respectively, coinciding with that of ILQGame~\cite{fridovich2020efficient}. The Laplace approximation has complexity $\mathcal{O}(Nn^3)$. The computation bottleneck of Algorithm~\ref{alg:cap} is the matrix inversion operation when solving the coupled Riccati equation, which has complexity $\mathcal{O}(N^3n^3)$. Therefore, the overall per-iteration complexity of multi-modal KLGame is $\mathcal{O}(HN^3n^3)$ (for ILQGame, the complexity is $\mathcal{O}(N^3n^3)$). Our implementation leverages the automatic vectorization of JAX~\cite{jax2018github} to efficiently batch-compute gradients and Hessians across different players and time steps. Throughout our experiments, we observe that Algorithm~\ref{alg:cap}, when warmstarted and solved in a receding horizon fashion, typically converges with a small number of LQL iterations.
As shown in Fig.~\ref{fig:runtime}, our solver achieves a 10Hz planning rate for a reasonable number of agents (around four) in complex traffic interaction scenarios where agents are in close proximity.
}

\begin{algorithm}[t!]
\caption{Multi-modal KLGame}\label{alg:cap}
\begin{algorithmic}
\Require Initial state $x_0$, $M$-mode reference policy $\policyref^i_t(\ctrl^i_t | \state_t)$, branching number $\underline{M} \leq M$

\State \textbf{Initialization}: Construct a scenario tree with node set $\nodeset$ and nominal trajectories $\eta_{\nodeset} := \{\state^\node, \ctrl^{i, \node}\}_{\node \in \nodeset}$


\While{\texttt{Not converged}} 
\State $\{\policy^{i,n}_t (\cdot)\}_{\node \in \nodeset} \gets$\textsc{BackwardPass}($\state_0$, $\eta_{\nodeset}$, $\policyref^i_t(\cdot)$)
\State $\{\policy^{i,n}_t (\cdot)\}_{\node \in \nodeset}$, $\eta_{\nodeset}$ $ \gets$\textsc{ForwardPass\&LineSearch}(\newline$\state_0$, $\eta_{\nodeset}$, $\{\policy^{i,n}_t (\cdot)\}_{\node \in \nodeset}$)
\EndWhile

\Ensure GMM approximate FBNE strategies $\{\policy^{i,n,*}_t (\cdot)\}_{\node \in \nodeset}$ and optimized nominal trajectories $\{\state^{\node,*}, \ctrl^{i, \node, *}\}_{\node \in \nodeset}$
 
\end{algorithmic}
\end{algorithm}

\section{Experimental Results}
\label{sec:experiments}
\newlength{\tempdima}
\newcommand{\colname}[1]
{{\makebox[\tempdima][c]{\textbf{#1}}}}
\newcommand{\rowname}[1]
{\rotatebox{90}{\makebox[\tempdima][c]{\textbf{#1}}}}

In this section, we study the role of the reference policy in helping KLGame find diverse game solutions more amenable to execution than existing methods. We compare our method against a mixture of deterministic, stochastic, and data-driven baselines on three simulated interaction scenarios with nonconvex costs and mode uncertainty. In all simulations, we use the 4D kinematic bicycle model~\cite{zhang2020optimization} to describe the motion of the vehicles.
We implement a receding horizon version of multi-modal KLGame using JAX~\cite{jax2018github}, which runs at 10Hz on a desktop with an AMD Ryzen 9 7950X CPU.

\subsection{Two-Player Tollbooth Selection}
\label{subsec:twoplayertollbooth}

\begin{figure}[t!]
\centering
\settoheight{\tempdima}{\includegraphics[width=.35\textwidth]{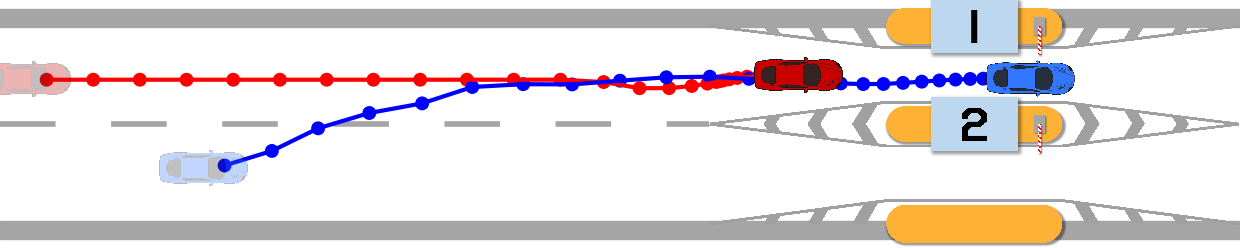}}
\begin{tabular}{@{}c@{}c}
& \colname{}  \\
\rowname{ILQGames} & \includegraphics[width=0.95\linewidth]{img/tollbooth/ilqgame_full_rollout.png}  \\
\rowname{MaxEnt} & \includegraphics[width=0.95\linewidth]{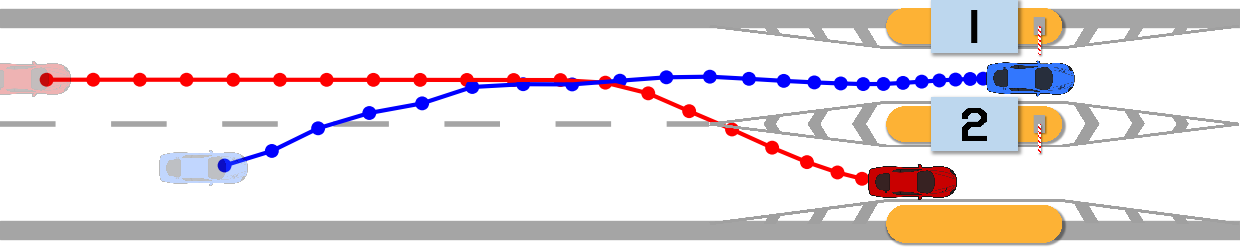}  \\
\rowname{KLGame} & \includegraphics[width=0.95\linewidth]{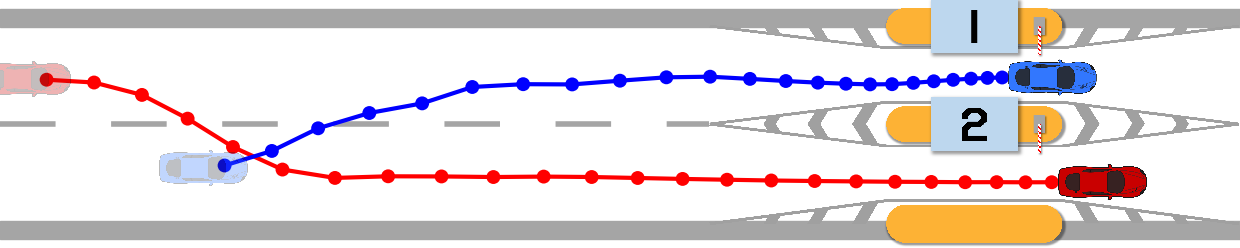} \\
\end{tabular}
\caption{Two-player tollbooth interaction. Player 1 (red) and Player 2 (blue) are approaching a tollbooth on a two-lane road (upper: 1, lower: 2). Both players receive a coordination bonus for taking opposite lanes (e.g. for reducing traffic), but Player 2 has a strong preference to merge into Lane 1. Since players incur a large penalty for deviating from the lane centerline, Player 1 may get caught in a local cost minimum (Lane 1) without external guidance to switch to Lane 2. }
\label{fig:tollbooth}
\end{figure}

In this experiment, we use the highway tollbooth coordination example from~\cite{hu2023emergent} to showcase the KLGame algorithm's ability to incorporate data-driven exploration in order to escape local minima despite nonconvex costs. We introduce two baseline methods.

\noindent\textbf{ILQGames \cite{fridovich2020efficient}.} ILQGames is a popular algorithm for solving near-LNE policies via a set of coupled algebraic Riccati equations. While optimizing for socially optimal costs allows players to find mutually beneficial trajectories, nonconvexity in costs coupled with a lack of exploration in the state space may incentivize the joint system to remain stuck at a local minimum. 


\noindent\textbf{Maximum Entropy Game \cite{so2022maximum, so2023mpogames}.} Inspired by the success of the maximum-entropy framework in games, we explore how random exploration can be used to escape local minima when costs are nonconvex. In maximum-entropy games, the optimal exploration rate is proportional to the Hessian of the control cost \cite{so2022maximum}. In contrast, the optimal KLGame policy incorporates the mean and covariance of a reference policy, allowing better exploration of focal regions of the state space, such as right or left turns. 


\begin{figure}[t!]
    \centering
    \includegraphics[width=0.3\textwidth]{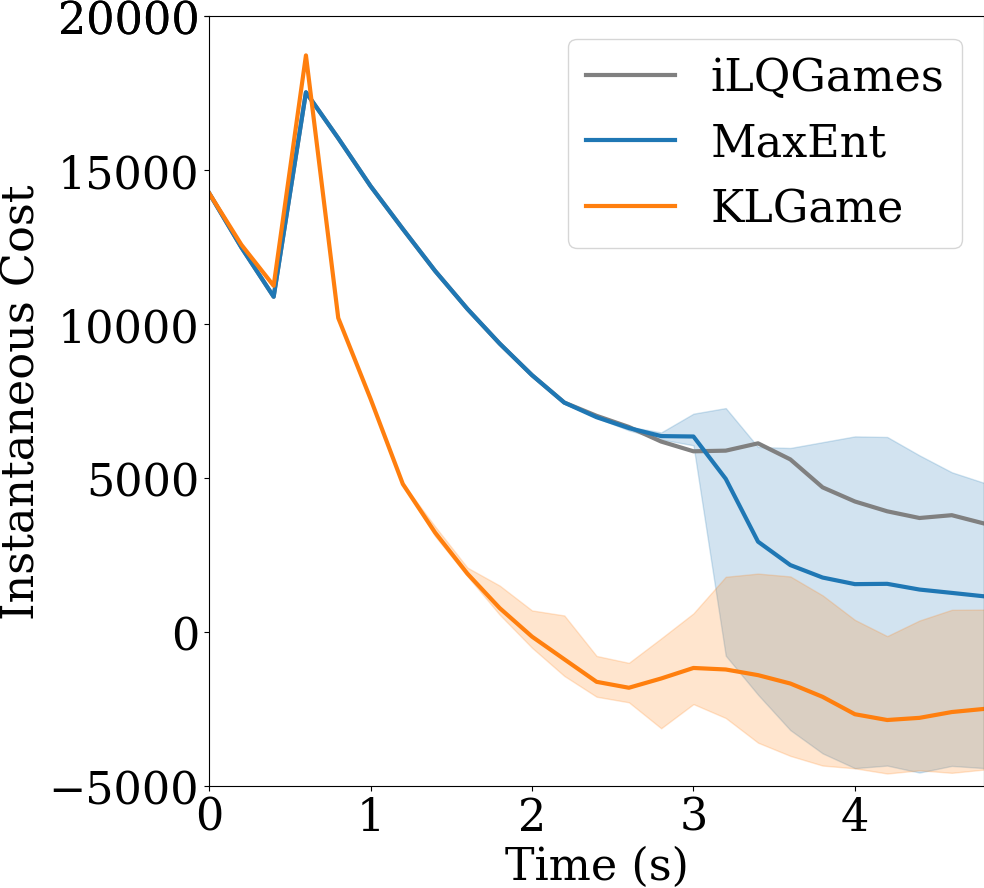}
    \caption{Instantaneous task costs for ILQGames, MaxEnt, and KLGame methods over $100$ trials. The initial spike in cost is due to Player 2 merging into Player 1's lane, eliminating the coordination bonus. MaxEnt incorporates stochastic exploration to find lower-cost trajectories, but incurs high variance. KLGame uses guidance from the reference policy to find a coordinating trajectory at an earlier iteration than MaxEnt, recovering the coordination bonus sooner and giving a lower overall cost.}
    \label{fig: costs}
\end{figure}

\setlength\tabcolsep{4pt} 
\begin{table}[t!]
\rebuttal{
    \centering
    \caption{Effect of Regularization in Tollbooth Interaction}
    \label{tab: klgame_baseline_comparison}
    \begin{tabular}{l|cccc}
        Method & CR $\uparrow$ & SR$\uparrow$ &  Prog. (m) $\uparrow$  &   Cost ($10^3$)$\downarrow$  \\
        \hline \hline
        ILQGames & $0.00\pm0.00$ & $\mathbf{1.00}\pm0.00$ & $45.0\pm0.00$ &  $8.45\pm0.00$ \\
        MaxEnt & $0.28\pm0.43$ & $0.68\pm0.48$ & $46.8\pm3.03$ &  $6.91\pm1.64$ \\
        \hline
        \textit{KLGame (Ours)} & $\mathbf{1.00}\pm0.00$ & $\mathbf{1.00}\pm0.00$ & $\mathbf{51.8}\pm5.83$  & $\textbf{4.22}\pm1.12$
    \end{tabular}
}
\end{table}

Fig. \ref{fig:tollbooth} compares qualitative behavior from KLGame against two methods that incorporate different regularization (no regularization for ILQGames and entropy regularization for MaxEnt). Two cars are rapidly approaching two toll stations and wish to get through the stations as quickly as possible and without slowing down. To capture the representative for each method, we show the \textit{ mean clustered equilibrium }for the three different methods. The ILQGames Player 1 is fully deterministic, and Player 1 must break and wait for Player 2 instead of changing lanes. The MaxEnt game Player 1 is able to explore and find the coordinating plan, but such exploration results in a delayed switch, and moreover, a penalty for crossing the lane boundary. The KLGame Player 1 incorporates a reference policy to find the coordinating plan. Using only a constant turn rate reference as a proxy for interactive toll-booth data, the KLGame Player 1 merges early and incurs the largest coordination bonus. Fig.~\ref{fig: costs} depicts instantaneous costs for the three methods, with KLGame incurring the lowest instantaneous and cumulative cost.

Table~\ref{tab: klgame_baseline_comparison} presents performance comparisons for the three methods over $100$ trials. We report the following methods: (i) coordination rate (CR): the proportion of trials that end with the vehicles in opposite lanes, (ii) safety rate (SR): the proportion of trials where the vehicles remain in-bounds and do not collide, (iii) road progress, the distance that Player 1 travels from their initial position (as a measure of efficiency), (iv) the minimum distance to Player 2, and (v) the time-averaged cost incurred over the full rollout, averaged over all of the trials. Using a simple reference policy, KLGame is able to coordinate at a much higher rate than other methods while maintaining safety.


\noindent \rebuttal{\textbf{Key Takeaway.} Mixing the game’s payoff with the expert’s reference policy allows the game solver to \textit{break out of a suboptimal equilibrium}.}


\begin{figure}[t!]
    \centering
    \includegraphics[width=0.95\columnwidth]{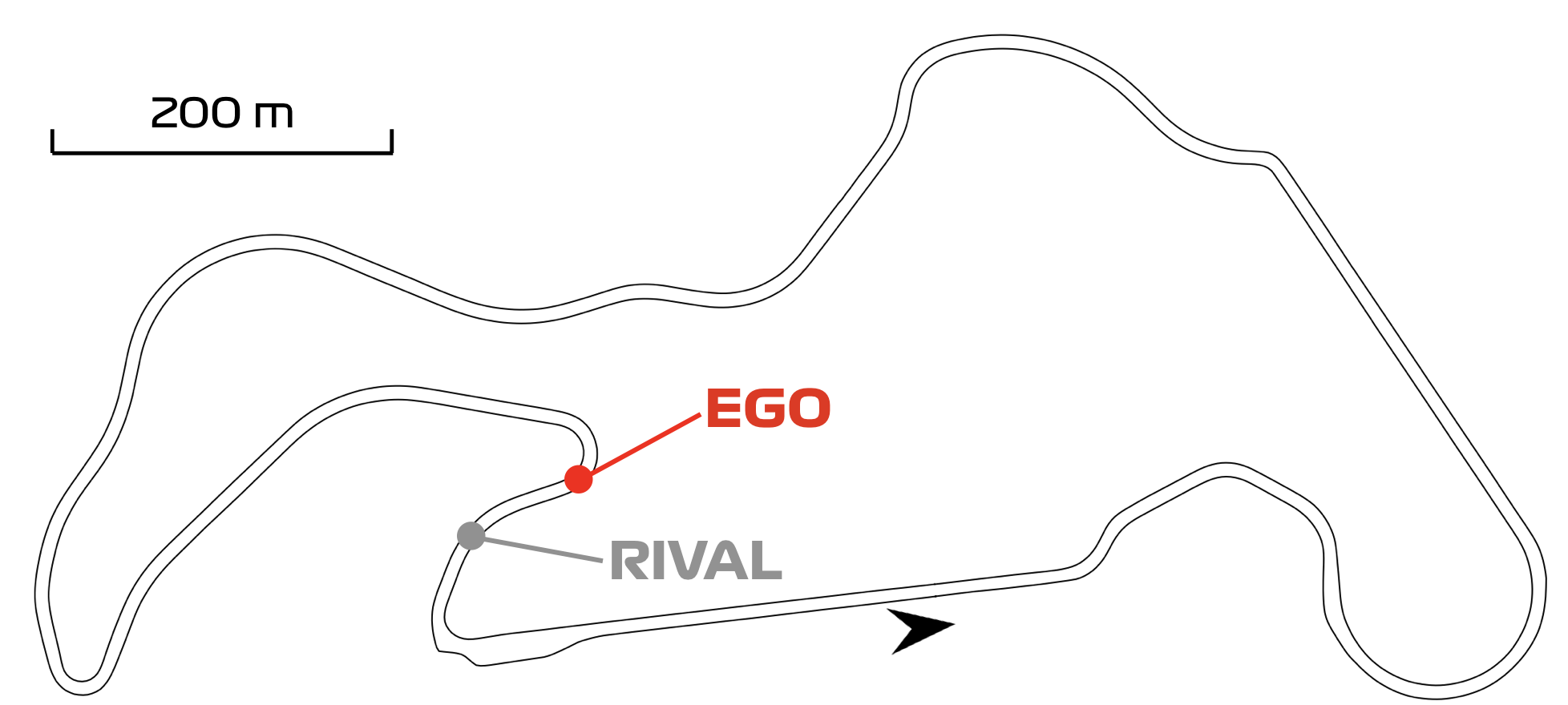}
    \caption{Initial positions of the ego and rival vehicles on the Thunderhill Raceway. The track direction is indicated by the black arrow.}
    \label{fig:thunderhill}
\end{figure}

\begin{figure}[t!]
    \centering
    \includegraphics[width=0.9\columnwidth]{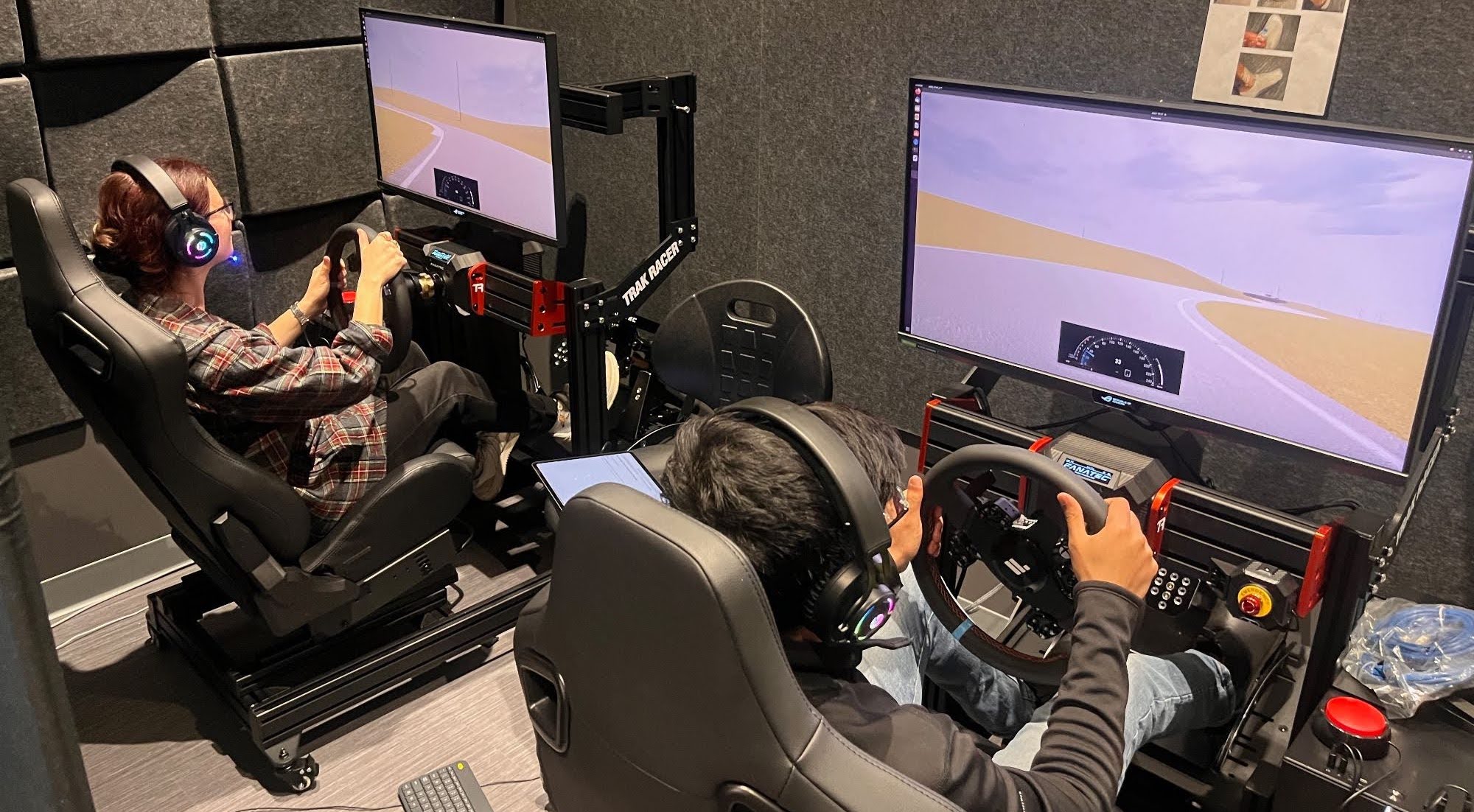}
    \caption{Driving simulator used for collecting expert racing data.}
    \label{fig:compactsim}
\end{figure}

\begin{figure*}[t!]
    \centering
    \includegraphics[width=2.0\columnwidth]{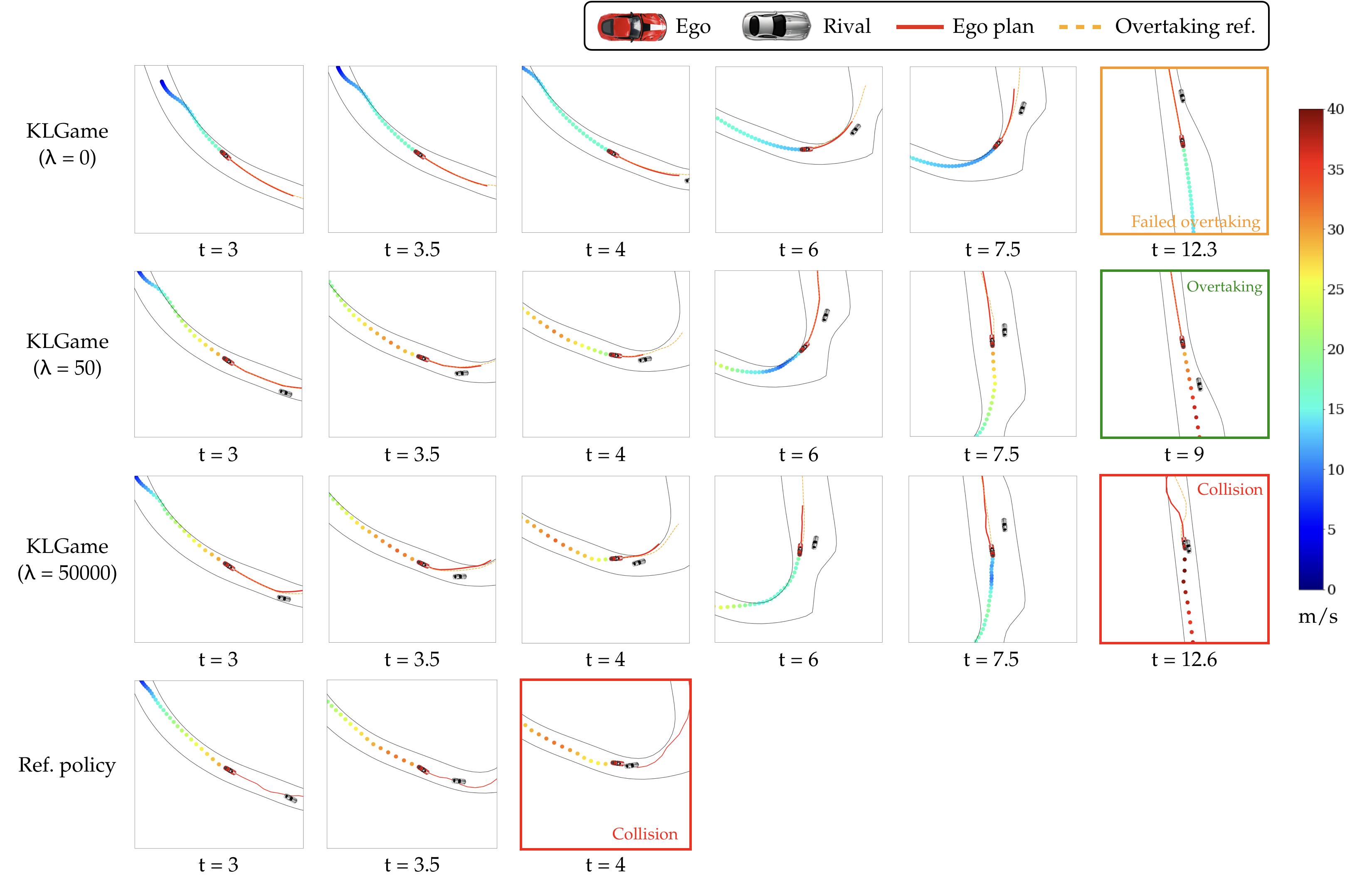}
    \caption{Autonomous racing with the ego car using the basic (unimodal) KLGame guided by a single overtaking reference policy learned from human demonstration.
    \emph{First row:} With un-regularized KL-Game ($\reg = 0$), the policy has no incentive to overtake and as a result, the ego cannot pass the rival.
    \emph{Second row:} For $\reg = 50$, a successful overtaking is observed.
    \emph{Third row:} For $\reg = 5 \times 10^4$, the ego vehicle's action, heavily influenced by the reference policy, leads to a collision after a failed overtake attempt.
    \emph{Fourth row:} The reference policy leads to a collision when the ego attempts to overtake the rival aggressively.
    }
    \vspace{-0.4cm}
    \label{fig:racing}
\end{figure*}

\subsection{Autonomous Car Racing}
\label{subsec:autonomous_car}

In this second experiment, we leverage KLGame within the more challenging context of autonomous car racing, delving into the advantages offered by the multi-modal KLGame policy framework.

\vspace{0.1cm}
\noindent \textbf{Setup.} We construct a simulated racing environment mirroring the Thunderhill Raceway in Willows, CA, USA, scaled accurately to 1:1 (see Fig.~\ref{fig:thunderhill}). In the race, the ego vehicle (red) is approaching a leading, rival vehicle (grey), and both vehicles are constrained to remain within the track boundaries. To make the scenario more challenging, we assume that only the ego vehicle has the responsibility to avoid a collision.

\begin{figure*}
    \centering
    \includegraphics[width=2.0\columnwidth]{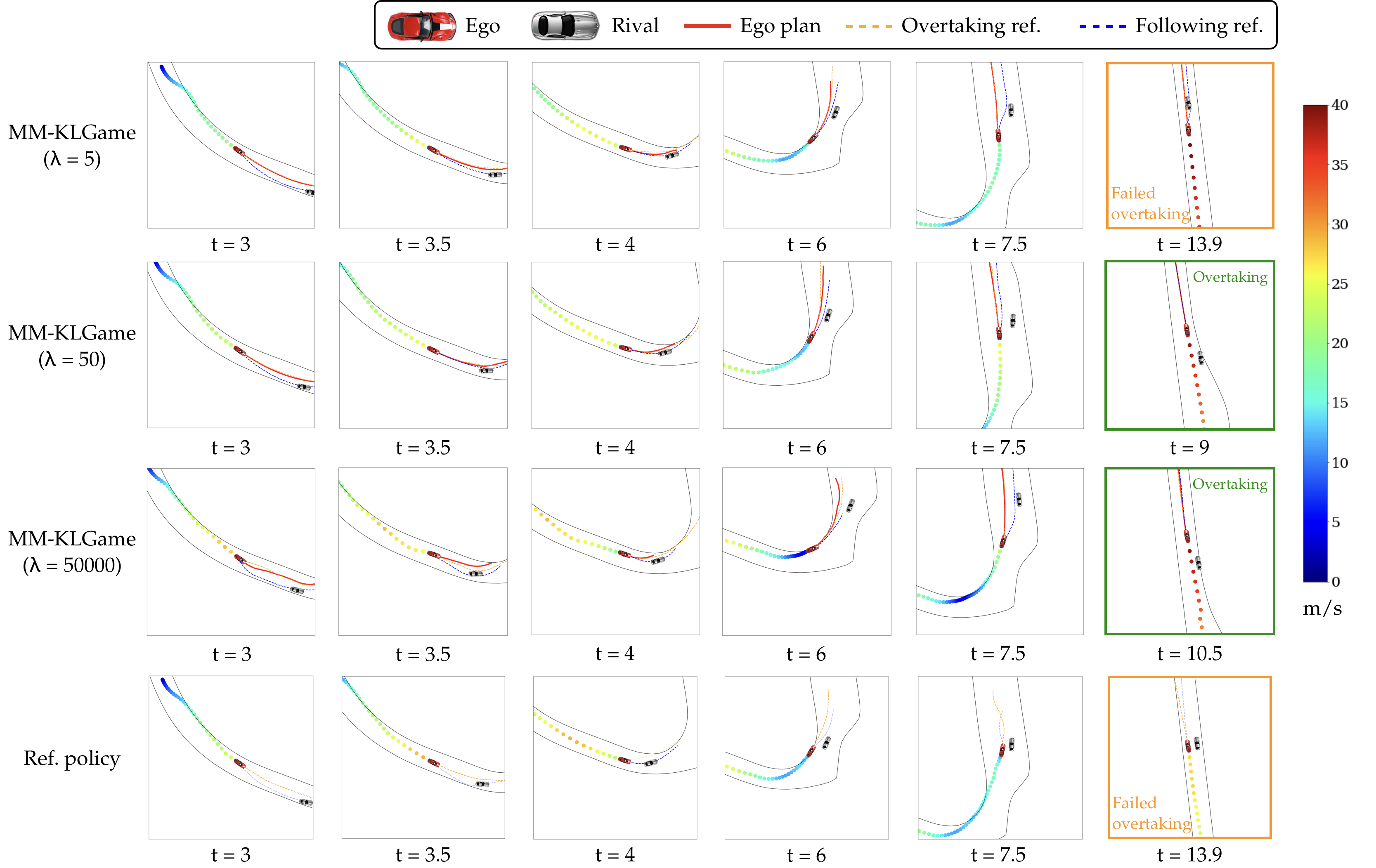}
    \caption{Autonomous racing with the ego car using a multi-modal KLGame policy mixing an overtaking and a following mode.
    \emph{First row:} With little regularization ($\reg = 5$), the ego is too conservative to overtake the rival.
    \emph{Second row:} For $\reg = 50$, the ego successfully overtakes the rival without incurring a collision.
    \emph{Third row:} For $\reg = 5 \times 10^4$, the MM-KLGame policy results in a safe overtake.
    \emph{Fourth row:} The multi-modal reference policy fails to overtake but averts collision. 
    }
    \vspace{-0.4cm}
    \label{fig:racing_mm}
\end{figure*}

\vspace{0.1cm}
\noindent \textbf{Reference policy and rival behaviors.} To obtain a reference policy, we use a driving simulator (Fig.~\ref{fig:compactsim}) to collect driving data (states, actions, track information, etc.) from races performed by two expert human racers in Carla~\cite{dosovitskiy2017carla}. 
With this data, we employ an inverse dynamic game training procedure inspired by works such as~\cite{li2023cost,liu2023learning}. This approach involves defining a set of basis functions for the game's cost---each representing a distinct racing specification like optimizing lap times, collision avoidance, and track boundary adherence---and learning a context-dependent cost weight model. The model, which takes as inputs driving data and outputs the cost weights, is learned as a deep neural network leveraging automatic differentiation in JAX~\cite{jax2018github}.
Given the limited volume of human-generated data, we use a dataset aggregation (DAgger) strategy~\cite{ross2011reduction} to enrich the training dataset for the reference policy.
To obtain a stochastic reference policy, we use a game-theoretic variant of the Boltzmann distribution~\cite[Section~3.2]{hu2023activeIJRR}, in which the Q-value function is obtained by solving the game with the inverse-learned cost weight model, which captures the noisily-rational aspect of human decision making.
Finally, we produce the rival's behavior with a defensive ILQGame policy~\cite{fridovich2020efficient}, whose parameters (cost terms and weights) are not accessible to either the reference or KLGame policies.

\noindent \textbf{Basic KLGame.} We first evaluate a basic unimodal KLGame policy, wherein the reference policy is simply the inverse-learned game policy that aggressively attempts to overtake the rival.
Snapshots from the simulation are plotted in Fig.~\ref{fig:racing}.
For $\lambda=0$, the ego simply follows the racing line (the time-optimal path of a race track) and fails to overtake the rival due to a lack of such incentive.
As $\lambda$ is increased to $50$, the ego car overtakes the rival safely.
When $\lambda = 5 \times 10^4$, the KLGame policy behaves closely to the aggressive reference policy, which leads to a collision.
Even though the ego's nominal game cost contains a collision avoidance term, it is ultimately overpowered by the regularization term weighted by the large value of $\lambda$ and is no longer capable of producing safe behaviors.
Finally, as a sanity check, we directly apply the reference policy to the ego vehicle and witness a collision, as expected.

\noindent \textbf{Multi-modal KLGame.} Motivated by the result shown above for \textit{Basic KLGame}, where blending only an aggressive reference policy can lead to unsafe outcomes, this experiment demonstrates how a \emph{multi-modal KLGame (MM-KLGame)} (c.f.~Sec.~\ref{sec:mmklg}) with an additional safety-enhancing mode performs in the racing scenario.
The second mode we use is a \emph{following} policy that controls the ego car to (i) follow the rival while keeping a constant distance when the rival is leading or (ii) defend against the rival after overtaking it.
During each planning cycle, instead of picking the mode via sampling, we use rollout-based safety filters~\cite[Section~3.3]{hsu2023sf} and perform a collision check for the KLGame-optimized trajectories associated with the overtaking mode: if they are collision-free, we choose the overtaking mode; otherwise, we pick the following mode.
Again, we vary the value of $\lambda$ and inspect the qualitative difference in the resulting ego behaviors.
The simulation snapshots are shown in Fig.~\ref{fig:racing_mm}.
When $\lambda$ takes a relatively small value, $\lambda = 5$, the MM-KLGame is not sufficiently guided by the overtaking policy and fails to overtake the rival.
When $\lambda=50$, the ego using MM-KLGame behaves similarly to the unimodal case and safely overtakes the rival.
However, in the case when $\lambda = 5 \times 10^4$, instead of incurring a collision as in the unimodal KLGame case, the MM-KLGame leads to a safe overtake thanks to the additional following mode in the multi-modal reference policy.
We also apply the reference policy to the ego using the same mode selection rule, \rebuttal{resulting in a \emph{shielded} inverse dynamic game policy.}
Interestingly, the ego car fails to overtake the rival due to a more frequent application of the following policy triggered by the unsafe overtaking policy, which ultimately slows down the ego vehicle.
This example also demonstrates the potential of KLGame using a multi-modal reference policy that mixes a data-driven policy and an optimization-based policy.

\setlength\tabcolsep{4pt}
\begin{table}[t!]
\rebuttal{
    \centering
    \caption{Racing Statistics}
    \label{tab: racing_stat}
    \begin{tabular}{l|ccccc}
        Method & Safe Rate $\uparrow$ & Overtaking Rate$\uparrow$   \\
        \hline \hline
        Inverse Dynamic Game~\cite{liu2023learning,li2023cost} & $0.73$ & $\mathbf{1.00}$ \\
        Shielded Inverse Dynamic Game & $0.90$ & $0.86$ \\
        \textit{KLGame (Ours)} & $0.85$ & $0.96$\\
        \textit{Multi-modal KLGame (Ours)} & $\mathbf{0.95}$ & $0.92$ \\
    \end{tabular}
}
\end{table}

\noindent \rebuttal{\textbf{Racing Statistics.}
Finally, in order to quantitatively assess the performance of our proposed KLGame and MM-KLGame policies, we compare them against two baselines: (1). inverse dynamic game with a neural cost model (the reference policy), and (2). the shielded inverse dynamic game policy.
We simulate 100 \textit{randomized} racing scenarios, each with different initial positions on the Thunderhill track (for both the ego and rival vehicles), as well as varying levels of aggressiveness in the rival's defending behaviors. Each scenario is evaluated for each policy using the same random seed.
We use the same blending factor $\lambda$ for both KLGame and MM-KLGame policies across all simulation instances.
The safe rate and overtaking rate statistics obtained from these trials are shown in Table~\ref{tab: racing_stat}.
Since the reference policy is learned from a dataset containing aggressive overtaking demonstrations, it achieves the highest overtaking rate but falls short of the safe rate.
The (uni-modal) KLGame policy significantly improves the safe rate by $12\%$ compared to the reference policy, albeit at the cost of a $4\%$ reduction in the overtaking rate.
The multi-modal KLGame policy, due to the additional car following mode, achieves the highest safety rate ($95\%$) among the four policies, while maintaining a decent overtaking rate ($92\%$).
Finally, the shielded inverse game policy, despite reaching a high safe rate, is ultimately the most conservative policy due to the planner-agnostic safety filter.
}

\noindent \rebuttal{\textbf{Key Takeaway.} Our blended multi-modal KLGame policy can balance between competitiveness and cautiousness in \emph{fast and rivalrous} interactions such as car racing.
}


\subsection{Scene-Consistent Simulated Agents in Waymax}
\label{subsec:waymax}

Modern trajectory forecasting models \cite{Nayakanti2022Wayformer, shi2022motion, shi2023mtr++} learn natural driving behaviors from data and generate accurate forecasts across different scenarios without the need for manual feature selection. However, these models may encounter difficulties when they are directly used to simulate interactive multi-agent scenarios, as the predicted marginal trajectories may not be compatible with others \cite{chen2022scept}. In our experiment, we demonstrate the ability of the proposed KLGame to capitalize on data-driven priors for generating natural and scene-consistent reference policies for interactive and safety-critical driving scenarios involving multiple agents.

\noindent\textbf{MTR Reference Policy.} In this experiment, we utilize a pre-trained MTR \cite{shi2022motion} motion prediction model, which outputs trajectories represented by a Gaussian mixture model (GMM) as the reference policy for KLGame. We then use the MTR-guided reference policy to emulate agent behaviors on Waymax \cite{gulino2023waymax} with the interactive validation dataset of Waymo Open Motion Dataset \cite{ettinger2021waymo}. Because the MTR model only outputs a GMM of future trajectories for each agent, we estimate the control sequences leading to the mean trajectory of the corresponding modes via optimization. The reference policy can be constructed from a mixture of control sequences with a fixed covariance and mode probabilities from MTR. This step can be skipped if the model can directly generate a policy given the state. 

\noindent\textbf{KLGame on Waymax.} For our implementation of KLGame on Waymax, we use \textit{only two} generic optimization targets as cost functions: collision distance and control magnitude regularization, with the rest of the agent's optimization objective coming from the MTR reference policy guidance. Alongside the control regularization, each agent is also tasked with maximizing its signed distance to the closest players and off-road areas. These objectives prompt the agents to prevent collisions effectively and adhere to the road. The signed distance between agents is calculated as the Minkowski difference between two footprints, while the distance to the off-road areas is obtained from road boundaries. In our experiments, all agents are replanned every second in a receding horizon fashion with a 2-second planning horizon and 0.1s discrete timestep.  


\begin{table*}[]
\rebuttal{
    \centering
    \begin{tabular}{l|cccccc}
        Method &  Collision Rate $\downarrow$ & ADE [3s] (m) $\downarrow$ & ADE [5s] (m) $\downarrow$ & ADE [8s] (m) $\downarrow$ & ADE [avg] (m) \\
        \hline \hline
        ILQGames \cite{fridovich2020efficient} & $0.13$ &  $0.23$ & $1.02$ & $3.34$ & $1.53$ \\
        MaxEnt \cite{mehr2023maximum} & $0.13$ & $0.23$ & $1.02$ & $3.34$ & $1.53$\\
        IDM \cite{treiber2000congested} & $0.10$ &  $0.81$ & $3.03$ & $8.04$ & $3.96$\\
        MTR \cite{shi2022motion} & $0.12$  & $0.15$ & $0.70$ & $2.19$ & $1.01$\\
        \textit{KLGame (Ours)} & $0.09$ &  $0.18$ & $0.80$ & $2.45$ & $1.14$\\
    \end{tabular}
    \caption{Results for a KLGame ego-agent planner against several other ego-agent baselines on Waymax \cite{gulino2023waymax} when the other agent follows the IDM~\cite{treiber2000congested} model.}
    \label{tab: waymax_experiment_results_idm}
}
\end{table*}

\begin{figure}[t!]
    \centering
    \includegraphics[width=0.5\textwidth]{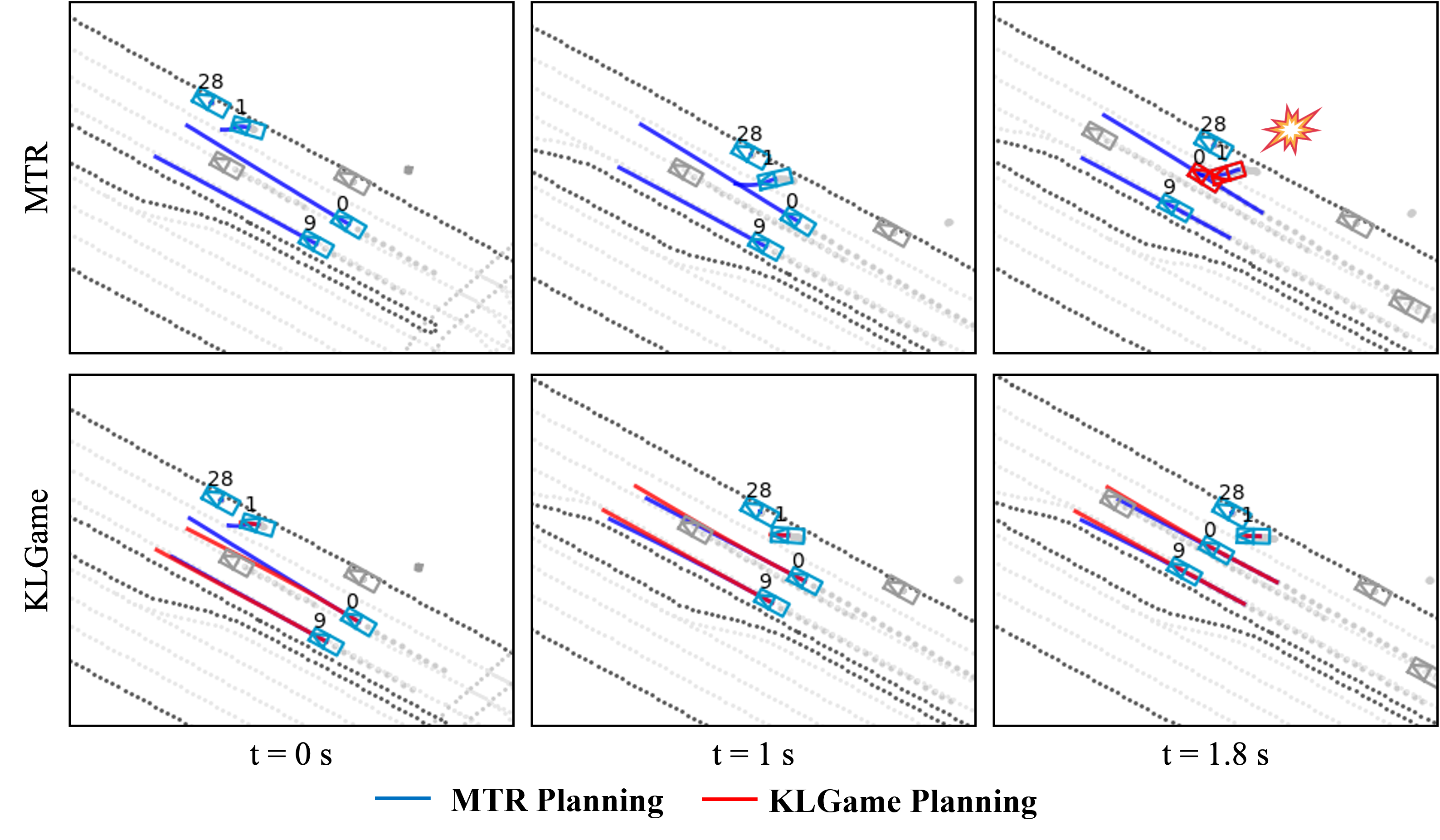}
    \caption{As car 1 pulls out from the parked vehicle, KLGame optimizes the joint trajectories, guiding both car 0 and car 1 to exhibit yielding behaviors to prevent a collision. However, the reference MTR policy, in contrast, results in a collision.}
    \label{fig: waymax_scenario_1}
\end{figure}

The first scenario (Fig. \ref{fig: waymax_scenario_1}) consists of four agents in the game, where one of the agents (vehicle 1) merges into the oncoming traffic from behind a parked vehicle (vehicle 28). In doing so, vehicle 1 obstructs vehicle 0's path. The socially optimal trajectories given by KLGame at $t=0$ are such that vehicle 0's modifies its trajectory to slightly veer left and decelerates agent 1. At the replanning stage $t=1$, despite MTR’s reference policy leading vehicle 9 towards the edge of the road at the end of the planning horizon, KLGame manages to optimize the trajectory, ensuring it adheres to the lane center, without requiring explicit information about the lane and reference path. The simulated scenario using the proposed KLGame maintains the scene collision-free throughout the episode, while marginal prediction from MTR leads to a collision between vehicles 0 and 1 at $t=1.8s$. 

In the second scenario (Fig. \ref{fig: waymax_scenario_3}), vehicle 0 merges onto the main road, interacting with vehicle 1. The standard MTR policy attempts to slow down vehicle 0, but results in a collision. In contrast, KLGame refines the joint trajectories, planning for vehicle 0 to change lanes, thereby avoiding a collision. While the aggressive evasive maneuver for vehicle 1 at $t=3$ initially risks taking the vehicle off the road, the reference policy regularization combined  with the optimization goal assist  vehicle 1 in maintaining the proper course on subsequent replanning  stages.

Finally, Fig. \ref{fig: waymax_scenario_2} shows that KLGame can yield diverse, yet scene-consistent, traffic plans, particularly when paired with a multi-modal reference policy. In this intersection scenario, agent 44 approaches the intersection from the wrong side of the road. This unlikely event leads to the marginal predictions from MTR having a hard time keeping the scenario collision-free. In the first rollout, as vehicle 44 opts for a mode to proceed straight, KLGame plans emergency maneuvers for both vehicles to avoid collision. In the subsequent rollout, vehicle 44 samples the mode of turning right. Despite a proximate interaction, KLGame computes the optimal joint trajectory, thereby successfully preventing a collision. 

\vspace{0.3cm}
\noindent
\rebuttal{\textbf{Waymax Statistics.} We compare our proposed KLGame method against four baselines: (1) ILGames \cite{fridovich2020efficient}, (2) MaxEnt dynamic game \cite{mehr2023maximum}, (3) the Intelligent Driver Model (IDM) \cite{treiber2000congested}, and (4) MTR \cite{shi2022motion}.  To measure the robustness of each planner to modeling errors, we evaluated each method against an IDM agent in $500$ scenarios from the Waymo Open Motion Dataset \cite{ettinger2021large} \textit{interactive validation } split, which is composed of especially challenging interactive scenarios selected by \cite{zhang2023cat}. The safe rate, collision rate, and ADE along each trajectory are reported in Table \ref{tab: waymax_experiment_results_idm}. Since the ILQGames and MaxEnt dynamic game baselines do not use a multi-modal reference policy, we instead incorporate reference tracking via the most likely MTR mode. The multimodal KLGame significantly improves on the unimodal ILQGames and MaxEnt dynamic game. Furthermore, KLGame achieves the lowest collision rate among all methods, including data-driven MTR \cite{shi2022motion} while maintaining the second-lowest ADE scores by a large margin. With only a $12\%$ increase in ADE ($2.45~m$ with KLGame versus $2.19~m$ with MTR), KLGame reduces the number of collisions by $25\%$ (45 with KLGame versus 60 with MTR). Overall, these results suggest that the multi-modal policy solved by KLGame encourages safety even when the other agents in the game are not modeled perfectly. }



\begin{figure*}[h!]
    \centering
    \includegraphics[width=\textwidth]{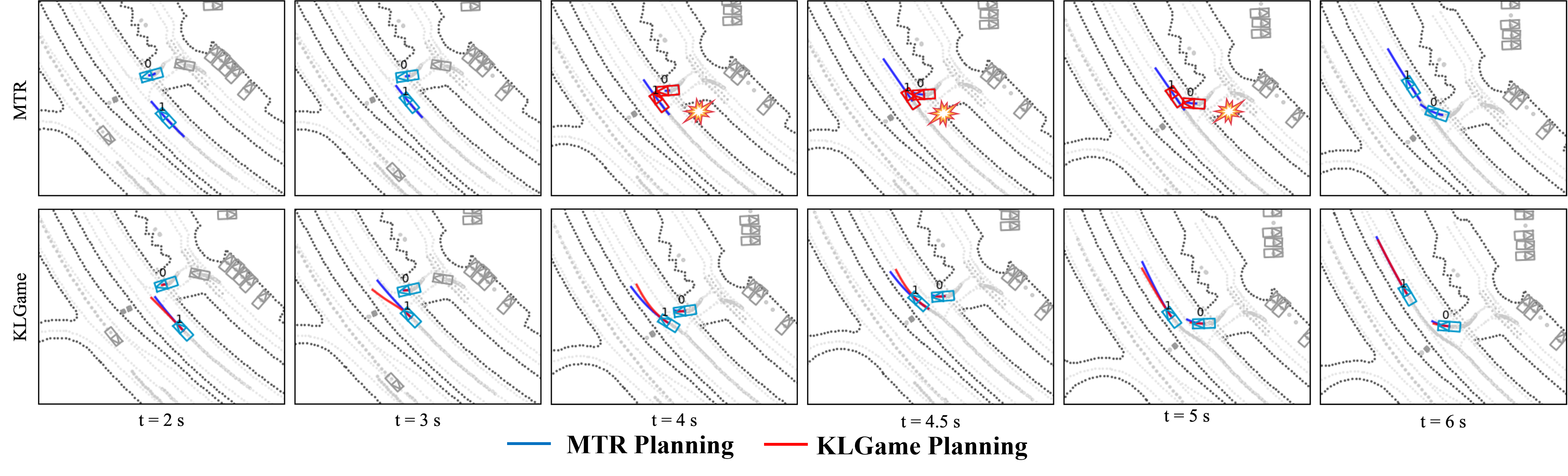}
    \caption{  
    As car 1 merges into the main road, the MTR-derived policy slows down car 0 but ultimately results in a collision. In comparison, KLGame optimizes the joint trajectories by planning for car 0 to switch lanes and yield, thus avoiding a collision.
    }
    \label{fig: waymax_scenario_3}
\end{figure*}

\begin{figure*}[h!]
    \centering
    \includegraphics[width=\textwidth]{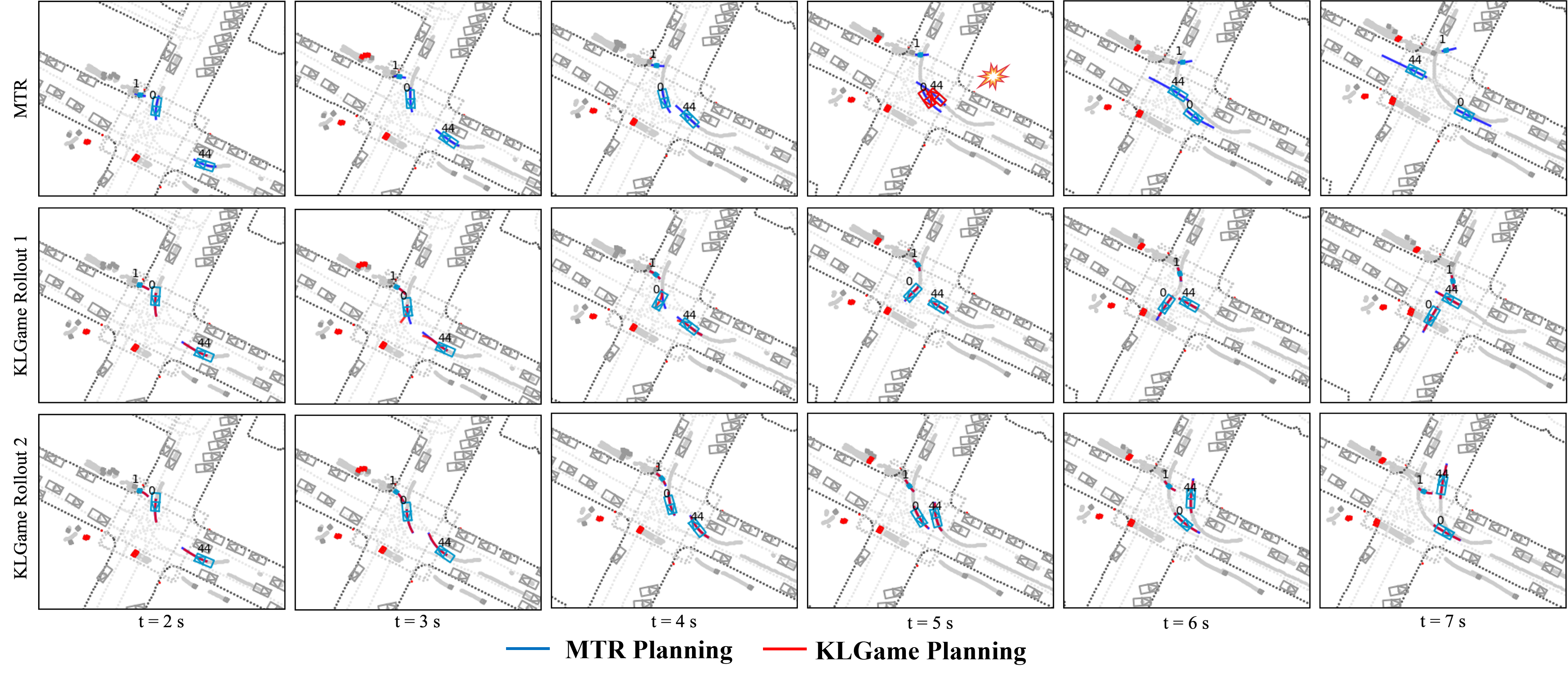}
    \caption{Car 44 approaches the intersection from the incorrect side of the road. In the first rollout, car 44 opts for a mode to proceed straight, causing KLGame to plan an emergency maneuver to circumvent collision. In the subsequent rollout, car 44 samples the mode of turning right. Despite a proximate interaction, KLGame computes the optimal joint trajectory, thereby successfully averting a collision. This demonstrates KLGame's ability to generate diverse trajectories with a multi-modal reference policy.
    }
    \label{fig: waymax_scenario_2}
\end{figure*}

\vspace{0.3cm}
\noindent \rebuttal{\textbf{Key Takeaway.}
%
The KLGame planner allows integrating the state-of-the-art large-scale, data-driven behavior model with an optimization-based game policy to improve safety for complex tasks such as urban autonomous driving \emph{at scale}.
}

\rebuttal{
\begin{figure}[h]
    \centering
    \includegraphics[width=0.4\textwidth]{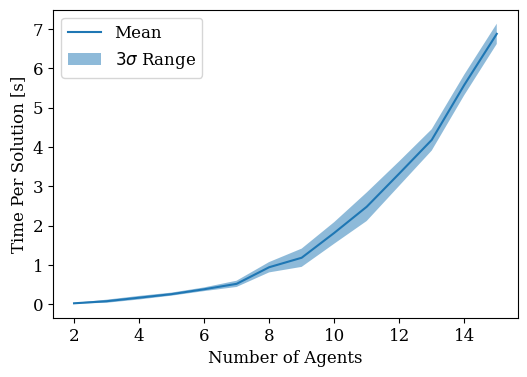}
    \caption{We examined the worst case runtime and its 3-$\sigma$ limits for KLGame with different number agents. In each trial, We repeated each trails for 80 times and ensure that the KLGame solver iterates 15 times with 15 line searches.}
    \label{fig:runtime}
\end{figure}
}
\section{Limitations and Future Work}
In this paper, we empirically demonstrate that, by tuning the KL-regularization weight $\reg$, we can trade off task performance with data-driven (human) behaviors. However, we have not yet delved into what the ``optimal'' value of $\reg$ should be and how to find it.
Such questions become more relevant in human-AI shared autonomy, which can naturally benefit from applying KLGame to ease the issue of \emph{automation surprise}~\cite{sarter1997team,jamieson2022b737} via blending the AI policy with the data-driven human policy.
We believe KLGame can be extended to account for human expectations by identifying the human-preferred value of $\reg$ via, e.g., differentiable game-theoretic planning~\cite{peters2022rss,li2023cost,liu2023learning} or inference-based planning techniques~\cite{fisacBHFWTD18, peters2020inference, tian2022safety, hu2023activeIJRR}.
In addition, motivated by the growing need for \emph{interactive} driving datasets and benchmarks, we see an open opportunity to use KLGame for generating scene-consistent and interaction-rich traffic predictions for autonomous driving.


\section{Conclusion}


In this work, we propose KLGame, an interactive planning and prediction framework that blends data-driven priors with task-optimal behaviors in a game-theoretic setting. By incorporating information about the uncertainty of other agents' intents---represented by a multi-modal probability distribution---KLGame enables a robot to plan for multiple future scenarios while also providing a mechanism for plan generation through random sampling. Through detailed simulations and real-world autonomous driving data, we demonstrate KLGame's ability to
incorporate both optimization-based and data-driven priors
into robot
motion planning.
In essence, KLGame represents a significant stride in integrating data-driven priors with game-theoretic planning, showcasing its prowess in the realm of autonomous driving.
Beyond the scope of self-driving cars, the general philosophy of KLGame opens doors to broader applications in human-centered robotics and artificial intelligence. 

\bibliographystyle{IEEEtran}
\balance
\bibliography{IEEEabrv,references.bib}

\newpage  

\onecolumn
\appendix

\subsection{Proof of Lemma~\ref{lem:Dupuis}}
\label{apdx:proof_lem_Dupuis}
By a Lagrange multiplier argument, we guess the minimizer of \eqref{eqn: dupuis_lemma}  is given by
\begin{equation} \label{eqn: dupuis_ansatz}
    \frac{d\pi^*}{d\tilde \pi} = \frac{e^{-\qfunc(x,u)/\reg} }{\int_{\mathcal U} e^{-\qfunc(x, u)/\reg} d\tilde{\pi}}.
\end{equation}
Next, we rearrange the term inside the infimum, which we denote by $I^\pi$. 
\begin{equation}
    \begin{aligned}
        I^\pi &=\expectation^\policy \left[ \qfunc(x,u) \right] + \reg D_{KL} (\pi || \tilde{\pi}) \\
        &=\int_{\mathcal{U}} \qfunc(x,u) d\pi+ \reg D_{KL} (\pi || \tilde{\pi}) \\
        &= \int_{\mathcal{U}} \qfunc(x,u)d\pi + \reg \int_{\mathcal{U}} \log \left( \frac{d\pi}{d\tilde{\pi}} \right) d\pi\\
        &= \int_{\mathcal{U}} \qfunc(x,u) d\pi + \reg \int_{\mathcal{U}} \log \left( \frac{d\pi}{d\pi^*}\frac{d\pi^*}{d\tilde{\pi}} \right) d\pi\\
        &= \reg \int_{\mathcal{U}} \log ( e^{ (\qfunc(x,u)/\reg})d\pi + \reg \int_{\mathcal{U}} \log \left( \frac{d\pi^*}{d\tilde{\pi}} \right) d\pi + \reg D_{KL}(\pi || \pi^*)\\
        &= \reg \int_{\mathcal{U}} \log \left( e^{ (\qfunc(x,u)/\reg} \times \frac{e^{-\qfunc(x,u)/\reg} }{\int_{\mathcal U} e^{-Q(x, u)/\reg} d\tilde{\pi} }\right)d\pi + \reg D_{KL}(\pi || \pi^*)\\
        &= -\reg \int_{\mathcal{U}} \log \left( \int_{\mathcal{U}} e^{-\qfunc(x,u)/\reg} d\tilde{\pi} \right) d\pi + \reg D_{KL}(\pi || \pi^*)\\
        &= -\reg \log \left( \int_{\mathcal{U}} e^{-\qfunc(x,u)/\reg} d\tilde{\pi} \right) \underbrace{\int_{\mathcal{U}} d\pi}_{=1} + \reg D_{KL}(\pi || \pi^*)\\
        & \geq -\reg \log \left( \int_{\mathcal{U}} e^{-\qfunc(x,u)/\reg} d\tilde{\pi} \right),
    \end{aligned}
\end{equation}
%
%
where we substitute the definition \eqref{eqn: dupuis_ansatz} in the penultimate equality. Note that if $\pi << \tilde{\pi}$ and $\tilde{\pi} << \pi^*$, then $\pi << \pi^*$. Recalling that $\reg D_{KL}(\pi || \pi^*) \geq 0$, we attain the infimum $\inf_\pi I^\pi$  exactly when $\pi = \pi^*$.

\subsection{Proof of Theorem~\ref{thm:global_nash}}
\label{apdx:thm_global_nash}

We restate the theorem for convenience.

\noindent The $N$-player nonzero-sum KL-LQG dynamic game~\eqref{eqn: overall optimization} admits a unique global feedback Nash equilibrium solution if,
\begin{enumerate}
    \item The dynamics follow the functional form
    \begin{equation*}
        x_{t+1} = A_t\jxt + \sum_{i \in [N]} B_t^i u^i_t + \dstb_t,
    \end{equation*}
    where $x_{0} \sim \gaussian(\mean_{x_{0}}, \covar_{x_{0}})$, $\dstb_t \sim \gaussian(0, \covar_{\dstb})$,

    \item The costs have the functional form 
    \begin{equation*}
    \begin{aligned}
        J^i(\pi^i) =& \mathbb E^{ \pi } \Bigg[ \sum_{t=0}^T \frac{1}{2} \Big(\jxt^\top Q^i_t \jxt +
        \sum_{j \in [N]} {u_t^j}^\top R_t^{ij} u_t^j \Big) + \sum_{t=0}^T  \reg^i D_{KL}(\pi_t^{i} || \tilde \pi_t^i ) \Bigg ],
    \end{aligned}
    \end{equation*}
    where, for all $t \in [T]$, $Q^i_t \succeq 0$, $R_t^{ij} \succeq 0$, $\forall i,j \in [N], j \neq i$, $R_t^{ii} \succ 0$, $\forall i \in [N]$,

    \item The reference policies are Gaussian, i.e., $\policyref^i_t \sim \gaussian(\tilde{\mu}^i_t, \tilde{\Sigma}^i_t)$ for all $t \in [T]$ and $i \in [N]$. 
\end{enumerate}
Moreover, the global (mixed-strategy) Nash equilibrium is a set of time-varying policies $\pi_t^{i*} = \mathcal N(\mu_t^{i*}, \Sigma_t^{i*}),~\forall i \in [N]$ with mean and covariance given by
\begin{equation} 
\begin{aligned}
    \mu^{i*}_t &=  - K^i_t \state_t - \kappa^i_t,   \\
    \Sigma^{i*}_t &= \left[ \textcolor{red}{\frac{1}{\reg^i}} \left(R^{ii}_t + {B^i_t}^T Z^i_{t+1} B^i_t \right) \textcolor{red}{+ \left(\tilde{\covar}^i_t\right)^{-1}} \right]^{-1},
\end{aligned}
\end{equation}
where $(K^i_t, \kappa^i_t)$ is given by solving the coupled KL-regularized Riccati equation:
\begin{equation}
    \begin{aligned}
        \big[{\color{red} \reg^i (\Tilde{\Sigma}^i_t)^{-1}}+ R_t^{ii} + B_t^{i\top}\qterm &B_t^{i\top} \big] K^i_t +  B_t^{i\top}\qterm\sum_{j\neq i}B_t^j K^j_t = B_t^{i\top}\qterm A_t, \\
        \big[ {\color{red}\reg^i(\Tilde{\Sigma}^i_t)^{-1}} + R_t^{ii} + B_t^{i\top}\qterm &B_t^{i\top} \big]\kappa^i_t + B_t^{i\top}\qterm\sum_{j\neq i}B_t^j \kappa^j_t = B_t^{i\top}\lterm {\color{red} - \reg^i(\Tilde{\Sigma}^i_t)^{-1}\Tilde{\mu}^i_t}.
    \end{aligned}
    \end{equation}
Value function parameters $(Z^i_{t}, z^i_{t})$ are computed recursively backward in time as
\begin{equation}
\begin{aligned}
    Z_t^i &= Q_t^i+\sum_{j\in[N]} (K_t^{j})^T R_t^{i j} K_t^j+F_t^T Z_{t+1}^i F_t {\color{red}+ \reg^i K^{i\top}_t(\Tilde{\Sigma}^i_t)^{-1}K^i_t},\\
    z_t^i &= \sum_{j\in[N]} (K_t^{j})^T R_t^{i j} \kappa^j_t+F_t^T\left(z_{t+1}^i+Z_{t+1}^i \beta_t\right) {\color{red}+ \reg^i K^{i\top}_t(\Tilde{\Sigma}^i_t)^{-1}(\kappa^i_t - \Tilde{\mu}^i_t)},
\end{aligned}
\end{equation}
with terminal conditions $Z^i_{T+1}=0$ and $z^i_{T+1}=0$.
Here, $F_t = A_t-\sum_{j\in[N]} B_t^j K_t^j$ and $\beta_t=-\sum_{j\in [N]} B_t^j \kappa_t^j$ for all $t \in [T]$.

\begin{proof}
    We proceed via induction. The base case $(t=T)$ follows directly from the hypothesis of the cost functional form in the theorem statement. For the induction step, we assume that the state-value function is a quadratic form at time $t+1$,
    \begin{equation}
        V_{t+1}^{i*} = \frac{1}{2}x_{t+1}^\top Z_{t+1}^i x_{t+1} + z_{t+1}^{i\top}x_t + \eta^i_{t+1},
    \end{equation}
    and we show this holds at time $t$. Denote $\pi_t = (\pi^i_t, \pi^{\neg i}_t)$. Therefore, we may write 
    \begin{equation}
        \begin{aligned}
            V_t^{i*}(x_t) &= \min_{\pi_t^i} \mathop{\mathbb{E}^{\pi_t}} \left[ \frac{1}{2} \left(x_t^\top Q^i_tx_t + \sum_{j\in[N]} u_t^{j\top} R^{ij}_tu_t^{j} \right) + \reg^i D_{KL}\left( \pi_t^i || \tilde{\pi}^i_t \right) +  \mathop{\mathbb{E}^{x_{t+1}}}\left[ V_{t+1}^{i*}(x_{t+1}) \right] \right]\\
             &= \min_{\pi_t^i} \mathop{\mathbb{E}^{\pi_t}} \left[ \underbrace{\frac{1}{2} \left(x_t^\top Q^i_tx_t + \sum_{j\in[N]} u_t^{j\top} R^{ij}_tu_t^{j} \right)}_{\text{(1)}} + \underbrace{\reg^i D_{KL}\left( \pi_t^i || \tilde{\pi}^i_t \right)}_{\text{(2)}} + \underbrace{ \mathop{\mathbb{E}^{x_{t+1}}}\left[ V_{t+1}^{i*}\left(A_tx_t + \sum_{j\in [N]} B^j_tu^j_t + d_t\right) \right]}_{\text{(3)}} \right].
             \label{eq:valuestar}
        \end{aligned}
    \end{equation}
    Let us now consider each term in the brackets:
    \begin{equation}
            \text{(1)} = \mathop{\mathbb{E}^{\pi_t}} \left[ \frac{1}{2} \left(x_t^\top Q^i_tx_t + \sum_{j\in[N]} u_t^{j\top} R^{ij}_tu_t^{j} \right) \right] = \frac{1}{2} x_t^\top Q^i_tx_t + \frac{1}{2}\left( \sum_{j\in[N]} \mu_t^{j\top}R_t^{ij}\mu_t^j + \trace \left(R_t^{ij}\Sigma_t^j \right) \right),
            \label{eq1}
    \end{equation}
    \begin{equation}
        \text{(2)} = \mathop{\mathbb{E}^{\pi_t}} \left[ \reg^i D_{KL}\left( \pi_t^i || \tilde{\pi}^i_t \right) \right] = \frac{\reg^i}{2} \left(n_{u^i} - \log\det (\Sigma_t^i) + \log\det(\tilde{\Sigma}_t^i) + \trace \left( \left( \tilde{\Sigma}_t^i\right)^{-1} \Sigma_t^i \right) + \left( \mu_t^i - \tilde{\mu}_t^i  \right)^\top \left( \tilde{\Sigma}_t^i\right)^{-1} \left( \mu_t^i - \tilde{\mu}_t^i  \right)\right),
        \label{eq2}
    \end{equation}

    \begin{equation}
        \text{(3)} 
        = \mathop{\mathbb{E}^{\pi_t}} \left[ \mathop{\mathbb{E}^{\pi_t,d}}\left[ \frac{1}{2} \left(A_tx_t + \sum_{j\in [N]} B^j_tu^j_t + d_t \right)^\top Z_{t+1}^i \left(A_tx_t + \sum_{j\in [N]} B^j_tu^j_t + d_t \right) + z_{t+1}^{i\top} \left(A_tx_t + \sum_{j\in [N]} B^j_tu^j_t + d_t \right) + \eta_{t+1}^i \right] \right].
        \label{eq3}
    \end{equation}
Expanding the terms inside the brackets of Equation \eqref{eq3}, taking expectations, and dropping terms not affecting the minimum, \eqref{eq3} yields
%
%
\begin{equation}
    \frac{1}{2} \left(\sum_{j\in [N]}B_t^j \mu_t^j \right)^\top Z_{t+1}^i \left(\sum_{j\in [N]}B_t^j \mu_t^j \right) + \sum_{j\in [N]} \left(A_tx_t \right)^\top Z_{t+1}^i B_t^j \mu_t^j + \frac{1}{2} \trace \left( Z_{t+1}^i \left( \Sigma_d + \sum_{j\in[N]}B_t^{j\top}\Sigma_t^j B_t^j \right)\right) + \sum_{j \in[N]} z_{t+1}^{i\top} B_t^j\mu_t^j.
    \label{eq3_short}
\end{equation}
Similarly, combining Equations \eqref{eq1}, \eqref{eq2}, \eqref{eq3_short} and dropping terms not affecting the minimum, we have 
\begin{equation}
    \begin{aligned}
        V_t^{i*}(x_t) &= \frac{1}{2}\left( \sum_{j\in[N]} \mu_t^{j\top}R_t^{ij}\mu_t^j + \trace \left(R_t^{ij}\Sigma_t^j \right) \right) \\
        &+\frac{\reg^i}{2} \left(- \log\det (\Sigma_t^i) + \log\det(\tilde{\Sigma}_t^i) + \trace \left( \left( \tilde{\Sigma}_t^i\right)^{-1} \Sigma_t^i \right) + \left( \mu_t^i - \tilde{\mu}_t^i  \right)^\top \left( \tilde{\Sigma}_t^i\right)^{-1} \left( \mu_t^i - \tilde{\mu}_t^i  \right)\right)\\
        &+ \frac{1}{2} \left(\sum_{j\in [N]}B_t^j \mu_t^j \right)^\top Z_{t+1}^i \left(\sum_{j\in [N]}B_t^j \mu_t^j \right) + \sum_{j\in [N]} \left(A_tx_t \right)^\top Z_{t+1}^i B_t^j \mu_t^j\\
        &+ \frac{1}{2} \trace \left( Z_{t+1}^i \left( \Sigma_d + \sum_{j\in[N]}B_t^{j\top}\Sigma_t^j B_t^j \right)\right) + \sum_{j \in[N]} z_{t+1}^{i\top} B_t^j\mu_t^j,
    \end{aligned}
    \label{eq:value_function_expected}
\end{equation}
which is convex quadratic in $\mu_t^i$ and convex in $\Sigma_t^i$. Therefore, by the first-order optimality conditions, we may take the gradient of Equation \eqref{eq:value_function_expected} and equate it to zero to find the optimal mean controls and the corresponding covariances. The optimal control law is linear state-feedback in the form of
\begin{equation}
    \mu_t^{i*} = -K_t^ix_t - \kappa_t^i.
    \label{eq:optimal-controls}
\end{equation}
Plugging \eqref{eq:optimal-controls} into the gradient, with respect to mean controls, of \eqref{eq:value_function_expected} and equating to zero yields coupled Riccati equations
\begin{equation}
    \begin{aligned}
        &\left(R_t^{ii} + \reg^i \left( \tilde{\Sigma}_t^i \right)^{-1} + B_t^{i\top}Z_{t+1}B_t^i \right)K_t^i + B_t^{i\top} Z_{t+1}^i \sum_{i\neq j \in [N]}B_t^j K_t^j = B_t^{i\top}Z_{t+1}^iA_t,\\
        &\left(R_t^{ii} + \reg^i \left( \tilde{\Sigma}_t^i \right)^{-1} + B_t^{i\top}Z_{t+1}B_t^i \right)\kappa_t^i + B_t^{i\top} Z_{t+1}^i \sum_{i\neq j \in [N]}B_t^j \kappa_t^j = z_{t+1}^{i\top}B_t^i - \reg^i \left( \tilde{\Sigma}_t^i \right)^{-1} \tilde{\mu}_t^i.
    \end{aligned}
\end{equation}
Repeating the same steps above, but this time imposing first-order optimality conditions with respect to the covariance $\Sigma_t^{i}$ yields

\begin{equation}
    R_t^{ii} + \reg^i (\tilde{\Sigma}_t^i)^{-1} - \reg^i(\Sigma_t^i)^{-1} + B_t^{i\top}Z_{t+1}^i B_t^{i} = 0,
\end{equation}
thereby giving the optimal covariance as 
\begin{equation}
    \Sigma_t^{i*} = \left[ \frac{1}{\reg^i} \left( R_t^{ii}+B_t^{i\top}Z_{t+1}^iB_t^i + \reg^i \left(\tilde{\Sigma}^i_t \right)^{-1} \right) \right]^{-1}.
\end{equation}
To solve the Riccati equations, it is necessary to derive expressions for $Z_{t}^i$ and $z_{t}^i$, which can be accomplished by taking expectations in \eqref{eq:valuestar} and rearranging with the optimal control law of \eqref{eq:optimal-controls}. Note that solving for $\eta_t^i$ is not necessary, as it will not affect the optimal policy. Following the lines of \cite{fridovich2020efficient}, we remove the $\min_{\pi_t^i}$ since we already imposed first-order optimality via our control scheme:
\begin{equation}
    \begin{aligned}
        V_t^{i*}(x_t)
             = & \mathop{\mathbb{E}^{\pi^*_t}} \frac{1}{2} \left(x_t^\top Q^i_tx_t + \sum_{j\in[N]} \left( -K_t^jx_t - \kappa_t^j \right)^{\top} R^{ij}_t\left( -K_t^jx_t - \kappa_t^j \right) \right)\\
             &+ \reg^i D_{KL}\left( \pi_t^{i*} || \tilde{\pi}^{i*}_t \right)\\
             &+  \mathop{\mathbb{E}^{\pi^*_t}} \mathop{\mathbb{E}^{x_{t+1}}}\frac{1}{2} \left(A_tx_t + \sum_{j\in [N]} B^j_t\left( -K_t^jx_t - \kappa_t^j \right) + d_t \right)^\top Z_{t+1}^i \left(A_tx_t + \sum_{j\in [N]} B^j_t\left( -K_t^jx_t - \kappa_t^j \right) + d_t \right) \\
             &+ \mathop{\mathbb{E}^{\pi^*_t}} \mathop{\mathbb{E}^{x_{t+1}}} z_{t+1}^{i\top} \left(A_tx_t + \sum_{j\in [N]} B^j_t\left( -K_t^jx_t - \kappa_t^j \right) + d_t \right) + \eta_{t+1}^i .
    \end{aligned}
\end{equation}
Taking expectations and rearranging yields the recursion parameters for the quadratic value function, which we now show. Firstly, let 
\begin{equation}
    \begin{aligned}
        F_t &= A_t - \sum_{j\in[N]}B_t^jK_t^j,\\
        \beta_t &= - \sum_{j\in[N]}B_t^j\kappa_t^j.
    \end{aligned}
\end{equation}
Then we have 
\begin{equation}
    \begin{aligned}
    V_t^{i*}(x_t)
             =&
        \frac{1}{2}x_t^\top \left[Q_t^i + \sum_{j\in[N]}K_t^{j\top}R_t^{ij}K_t^j + F_t^\top Z_{t+1}^iF_t + \reg^i K_t^{i\top}\left(\tilde{\Sigma}_t^i \right)^{-1}K_t^i \right]x_t\\
        &+ \left[\sum_{j\in [N]} K_t^{j\top}R_t^{ij}\kappa_t^j + F_t^\top \left(z_{t+1}^i + Z_{t+1}^i\beta_t \right)^\top + \reg^i K_t^{i\top}\left(\tilde{\Sigma}_t^i \right)^{-1} \left(\kappa_t^i - \tilde{\mu}_t^i\right) \right]x_t\\
        &+ \cdots , 
    \end{aligned}
\end{equation}
where $\cdots$ denotes the constant values pertaining relevant for calculating the $\eta_{t}^i$ terms, but are unnecessary for deriving the optimal policy. The terms in the brackets are precisely the recursions needed:
\begin{equation}
        \begin{aligned}
            Z_t^i &= Q_t^i + \sum_{j\in[N]}K_t^{j\top}R_t^{ij}K_t^j + F_t^\top Z_{t+1}^iF_t + \reg^i K_t^{i\top}\left(\tilde{\Sigma}_t^i \right)^{-1}K_t^i, ~&&Z^i_{T+1}=0, \\
    z_t^i &= \sum_{j\in [N]} K_t^{j\top}R_t^{ij}\kappa_t^j + F_t^\top \left(z_{t+1}^i + Z_{t+1}^i\beta_t \right)^\top + \reg^i K_t^{i\top}\left(\tilde{\Sigma}_t^i \right)^{-1} \left(\kappa_t^i - \tilde{\mu}_t^i\right),~&&z^i_{T+1}=0,\\
        \end{aligned}
    \end{equation}
thereby completing our proof by construction.
Finally, we note that, in the above proof, we have assumed that the coupled Riccati equations are uniquely solvable to ensure a unique FBNE. A more detailed discussion on the solution uniqueness of coupled Riccati equations can be found in concurrent work~\cite{aggarwal2024policy}.
\end{proof}

\subsection{Proof of Proposition~\ref{prop:cl_guide}}
\label{apdx:prop_cl_guide}

The proof follows the same structure as in  \autoref{thm:global_nash}. We develop the expressions of \eqref{eq:valuestar} with the expressions for \eqref{eq1} and \eqref{eq3} remaining the same. The expression in \eqref{eq2} needs to be modified to account for the time dependence in the reference policy. We have:

    \begin{equation}
        \begin{aligned}
            \text{(2)}~\;\;  \reg^i D_{KL}&(\pi_t^{i} || \tilde \pi_t^i ) = \frac{\reg^i}{2}\Big[
       n_{u^i} - \logdet \Sigma^i_t + \logdet \Tilde{\Sigma}^i_t +\trace((\Tilde{\Sigma}^{i}_t)^{-1}\Sigma^i_t) +  (\mu^{i*} - \Tilde{\mu}^{i*})^\top (\Tilde{\Sigma}^i_t)^{-1} (\mu^{i*} - \Tilde{\mu}^{i*})\Big] \\
       =&\frac{\reg^i}{2}\Big[
       n_{u^i} - \logdet \Sigma^i_t + \logdet \Tilde{\Sigma}^i_t +\trace((\Tilde{\Sigma}^{i}_t)^{-1}\Sigma^i_t) +  (\mu^{i*} + \Tilde{K}^i_t\state + \Tilde{\kappa}^i_t)^\top (\Tilde{\Sigma}^i_t)^{-1} (\mu^{i*} + \Tilde{K}^i_t\state + \Tilde{\kappa}^i_t)\Big]  \\
       =&\frac{\reg^i}{2}\Big[
       n_{u^i} - \logdet \Sigma^i_t + \logdet \Tilde{\Sigma}^i_t +\trace((\Tilde{\Sigma}^{i}_t)^{-1}\Sigma^i_t) + \mu^{i*\top} (\Tilde{\Sigma}^i_t)^{-1} \mu^{i*} +  (\Tilde{K}^i_tx_t+\Tilde{\kappa}^i_t)^\top(\Tilde{\Sigma}^i_t)^{-1}(\Tilde{K}^i_tx_t+\Tilde{\kappa}^i_t)\\
       & + \mu^{i*\top} (\Tilde{\Sigma}^i_t)^{-1} (\Tilde{K}^i_tx_t + \Tilde{\kappa}^i_t)+ (\Tilde{K}^i_t x_t+\Tilde{\kappa}^i_t)^\top(\Tilde{\Sigma}^i_t)^{-1} \mu^{i*}\Big].
        \end{aligned}
    \end{equation}
    Following the same arguments as in Appendix~\ref{apdx:thm_global_nash}, we take derivatives and enforce them to be $0$ to obtain the following set of coupled Riccati equations:
    \begin{equation}
    \begin{aligned}
    \label{eqn: ricatti}
        \big[ \reg^i (\Tilde{\Sigma}^i)^{-1} + R_t^{ii} + B^{i\top}\qterm &B^{i\top} \big] K^i_t +  B^{i\top}\qterm\sum_{j\neq i}B^j K^j_t = B^{i\top}\qterm A  + \reg^i(\Tilde{\Sigma}^i_t)^{-1}\Tilde{K}^i_t,\\
        \big[ \reg^i(\Tilde{\Sigma}^i)^{-1} + R_t^{ii} + B^{i\top}\qterm &B^{i\top} \big]\kappa^i_t + B^{i\top}\qterm\sum_{j\neq i}B^j \kappa^j_t = B^{i\top}\lterm + \reg^i(\Tilde{\Sigma}^i_t)^{-1}\Tilde{\kappa}^i_t.
    \end{aligned}
    \end{equation}
    Substituting the formulas for $\mu^i_t$ back into the expression for $V_t^i(\state)$ we obtain the formulas in \eqref{eq:cl_update}.

\end{document}